\def\eqref#1{equation~\ref{#1}}
\def\1{\bm{1}}
\DeclareMathAlphabet{\mathsfit}{\encodingdefault}{\sfdefault}{m}{sl}
\SetMathAlphabet{\mathsfit}{bold}{\encodingdefault}{\sfdefault}{bx}{n}
\newcommand{\reg}{\lambda}
\newcommand{\Var}{\mathrm{Var}}
\title{Calibrating and Improving Graph Contrastive Learning}
\author{\name Kaili Ma \email klma@cse.cuhk.edu.hk \\
      \addr Department of Computer Science and Engineering, \\ 
The Chinese University of Hong Kong
      \AND
      \name Garry Yang \email hcyang@cse.cuhk.edu.hk \\
      \addr Department of Computer Science and Engineering, \\
The Chinese University of Hong Kong
      \AND
      \name Han Yang \email hyang@cse.cuhk.edu.hk \\
      \addr Department of Computer Science and Engineering, \\
The Chinese University of Hong Kong
      \AND
      \name Yongqiang Chen \email yqchen@cse.cuhk.edu.hk \\
      \addr Department of Computer Science and Engineering, \\
The Chinese University of Hong Kong
      \AND
      \name James Cheng \email jcheng@cse.cuhk.edu.hk \\
      \addr Department of Computer Science and Engineering, \\
      The Chinese University of Hong Kong\\
      }
\newcommand{\norm}[1]{\left\lVert#1\right\rVert}
\DeclareRobustCommand{\abs}{\@ifstar\star@abs\normal@abs}
\newcommand{\star@abs}[1]{\mathleft|#1\mathright|}
\newcommand{\normal@abs}[2][]{\mathopen{#1|}#2\mathclose{#1|}}
\newtheorem{theorem}{Theorem}
\newtheorem{claim}{Claim}[section]
\newtheorem{remark}{Remark}
\newtheorem{lemma}{Lemma}
\newtheorem*{remark*}{Remark}
\newcommand{\modelname}{ours}
\newcommand{\samelayer}{LC}
\newcommand{\difflayer}{ML}
\newcommand{\PM}[1]{\scriptsize{\(\pm{#1}\)}}
\renewcommand{\Var}{\mathrm{Var}}
\renewcommand{\reg}{Contrast-Reg}
\newcommand{\tmlr}[1]{{\color{black}{#1}}}
\begin{document}
\maketitle

\begin{abstract}

Graph contrastive learning algorithms have demonstrated remarkable success in various applications such as node classification, link prediction, and graph clustering. 
\tmlr{However, in unsupervised graph contrastive learning, some contrastive pairs may contradict the truths in downstream tasks and thus the decrease of losses on these pairs undesirably harms the performance in the downstream tasks. To assess the discrepancy between the prediction and the ground-truth in the downstream tasks for these contrastive pairs, we adapt expected calibration error (ECE) to graph contrastive learning.
The analysis of ECE motivates us to propose a novel regularization method, \textbf{Contrast-Reg}, to ensure that decreasing the contrastive loss leads to better performance in the downstream tasks. As a plug-in regularizer, Contrast-Reg effectively improves the performance of existing graph contrastive learning algorithms.
We provide both theoretical and empirical results to demonstrate the effectiveness of Contrast-Reg in enhancing the generalizability of the Graph Neural Network (GNN) model and improving the performance of graph contrastive algorithms with different similarity definitions and encoder backbones across various downstream tasks.}

\end{abstract}

\definecolor{darkgreen}{rgb}{0.29, 0.67, 0.29}

\section{Introduction}

Graph structures are widely used to capture abundant information, such as hierarchical configurations and community structures, present in data from various domains like social networks, e-commerce networks, knowledge graphs, the World Wide Web, and semantic webs. By incorporating graph topology along with node and edge attributes into machine learning frameworks, graph representation learning has demonstrated remarkable success in numerous essential applications, such as node classification, link prediction, and graph clustering.
\tmlr{Graph contrastive learning has emerged as an effective method for learning graph representations, including algorithms such as DGI \citep{DBLP:DGI}, GMI \citep{DBLP:GMI}, GCC \citep{DBLP:GCC} and PyGCL \citep{zhu2021an}. These algorithms use the noise contrastive estimation loss (NCEloss) to contrast the similarities of similar (or positive) node pairs against those of negative pairs. The algorithms differ in their definition of node similarity, hence the design of contrastive pairs. The design of the encoder backbone can vary from Graph Neural Networks (GNNs) \citep{kipf2017semi,hamilton2017inductive,velickovic2018graph, xu2018how} to a skip-gram based model \citep{DBLP:LINE, DBLP:DeepWalk, DBLP:node2vec}. In this paper, we focus on employing graph contrastive learning algorithms in an unsupervised learning manner and utilize GNNs as the encoder. The learned node embeddings are directly delivered to the downstream tasks.
}

Although graph contrastive algorithms have demonstrated strong performance in some downstream tasks, we have discovered that directly applying contrastive pairs and GNN models to new tasks or data is not always effective (as shown in Section~\ref{section:exp}).
\tmlr{
Nodes with different ground-truth labels can be selected as positive pairs during unsupervised training.
Contrastive models that solely minimize the NCE loss can be misled by these pairs and learn certain spurious features that decrease the loss but are detrimental to downstream tasks.
To address this issue, we adapt expected calibration error (ECE) to graph contrastive learning to assess the discrepancy between the predicted probability and the ground-truth in downstream tasks for the selected contrastive pairs. High ECE values indicate mis-calibration of the model.
We identify two factors that contribute to the model's mis-calibration: (a) the expectation of the prediction probability for randomly sampled pairs, $\mathbb{E}{(v,v^{\prime})}[\sigma(h_v\cdot h_{v^{\prime}})]$, and (b) the probability $q^+$ of $v^{\prime}_+$ sharing the same label as $v$ in positive sampling.}
To address the mis-calibration in existing graph contrastive learning algorithms, we introduce a novel regularization method, denoted as \textbf{Contrast-Reg}. Contrast-Reg employs a regularization vector $r$, which consists of a random vector with each entry within the range $(0,1]$. 
By ensuring node representations maintain similarity with $r$ and that pseudo-negative representations, generated through shuffled node features, remain dissimilar to $r$, Contrast-Reg guarantees that minimizing the contrastive loss results in high-quality representations that improve accuracy in downstream tasks, rather than overfitting specific spurious features.
We provide both theoretical evidence and empirical experiments to support the effectiveness of Contrast-Reg. First, we derive a generalization bound for our contrastive GNN framework, building upon the theoretical framework depending on Rademacher complexity~\citep{DBLP:Theoretical_analysis}, and demonstrate that Contrast-Reg contributes to a decrease in the upper bound. This result indicates that this term promotes better alignment with the performance of downstream tasks while simultaneously minimizing the training loss, thereby improving the generalizability of the GNN model.
Furthermore, we design experiments to examine the empirical performance of Contrast-Reg by formulating the graph contrastive learning framework into four components: a similarity definition, a GNN encoder, a contrastive loss function, and a downstream task. We apply Contrast-Reg to different compositions of these components and achieve superior results across various compositions.

The main contributions of this paper can be summarized as follows:
\begin{itemize}[leftmargin=*]
    \item (Section 4.1) \tmlr{We adapt  expected calibration error (ECE) to assess the discrepancy between the prediction and the ground-truth in the downstream tasks on contrastive pairs, and identify two key factors that influence the discrepancy.}

    \item (Section 4.2) \tmlr{We propose a novel regularization method, Contrast-Reg, to ensure that minimizing the contrastive loss results in high-quality representations and improved accuracy in downstream tasks. Contrast-Reg is a plugin regularizer generally effective with respect to various graph contrastive learnings.
}

    \item (Section 4.2 \& Section 5 \& Section 6) \tmlr{We provide both theoretical results and empirical results to demonstrate the effectiveness of Contrast-Reg in improving the generalizability of the GNN model and achieving superior performance across different existing graph contrastive learning algorithms.  
    }
\end{itemize}

\section{Related Work}\label{section:related}

\paragraph{Graph representation learning.} Many graph representation learning models have been proposed.  Factorization-based models~\citep{ahmed2013distributed, cao2015grarep, DBLP:NEMF} factorize an adjacency matrix to obtain node representations. Random walk-based models such as DeepWalk~\citep{DBLP:DeepWalk} sample node sequences as the input to skip-gram models to compute the representation for each node. Node2vec~\citep{DBLP:node2vec} balances depth-first and breadth-first random walk when it samples node sequences.  HARP~\citep{harp} compresses nodes into super-nodes to obtain a hierarchical graph to provide hierarchical information to random walk. GNN models~\citep{kipf2017semi,hamilton2017inductive,velickovic2018graph, xu2018how,qu2019gmnn,thomas2023graph} have shown great capability in capturing both graph topology and node/edge feature information. 
Most GNN models follow a neighborhood aggregation schema, in which each node receives and aggregates the information from its neighbors in each GNN layer, 
i.e., for the \(k\)-th layer, \(\tilde{h}^{k}_i \!=\! aggregate(h_j^{k-1}, j\!\in\! neighborhood(i))\), and \(h^{k}_i\! =\! combine(\tilde{h}^{k}_i,\! h^{k-1}_i)\). 
This work employs GNN models as the backbone and tests the representation across various downstream tasks, such as node classification, link prediction, and graph clustering.

\paragraph{Graph contrastive learning.} Contrastive learning is a self-supervised learning method that learns representations by contrasting positive pairs against negative pairs. Contrastive pairs can be constructed in various ways for different types of data and tasks, such as multi-view~\citep{DBLP:goodviews,DBLP:multiviews}, target-to-noise~\citep{DBLP:CPC,DBLP:CPCv2}, mutual information~\citep{DBLP:MINE,DBLP:deep_infomax}, instance discrimination~\citep{DBLP:instance_discrimination}, context co-occurrence~\citep{DBLP:word2vec}, clustering~\citep{DBLP:simultaneous_clustering_and_representation_learning, DBLP:DeepCluster}, multiple data augmentation~\citep{DBLP:SimCLR}, known and novel pairs~\citep{sun2023opencon}, and contextually relevant~\citep{DBLP:journals/corr/abs-2201-10005}. Contrastive learning has been successfully applied to numerous graph representation learning models \citep{DBLP:DGI,DBLP:GMI, DBLP:conf/nips/YouCSCWS20,DBLP:GCC, DBLP:journals/corr/abs-2106-05455} to pseudo subgraph instance discrimination as a contrastive learning training objective and to leverage contrastive learning to empower graph neural networks in learning node representations. We characterize different types of node-level similarity as follows:

\begin{itemize}[leftmargin=*]
\item \textbf{Structural similarity}: Structural similarity can be captured from various perspectives. From a graph theory viewpoint, GraphWave~\citep{DBLP:GraphWave} leverages the diffusion of spectral graph wavelets to capture structural similarity, while struc2vec~\citep{DBLP:stru2vec} uses a hierarchy to measure node similarity at different scales. From an induced subgraph perspective, GCC~\citep{DBLP:GCC} treats the induced subgraphs of the same ego network as similar pairs and those from different ego networks as dissimilar pairs. To capture community structure, vGraph~\citep{DBLP:vgraph} utilizes the high correlation between community detection and node representations to incorporate more community structure information into node representations. To capture global-local structure, DGI~\citep{DBLP:DGI} maximizes the mutual information between node representations and graph representations to allow node representations to contain more global information. 
\tmlr{GDCL \citep{ijcai2021p473} incorporates the results of graph clustering to decrease the false-negative samples. DiGCL \citep{tong2021directed} generates contrastive pairs by Laplacian perturbations to  retain more structural features of directed graphs. GCA \citep{zhu2021GCA} constructs contrastive pairs by designing adaptive augmentation schemes based on node centrality measures. AD-GCL \citep{suresh2021adversarial} adopts trainable edge-dropping graph augmentation and optimizes both adversarial graph augmentation strategies and contrastive objectives to avoid the model learning redundant graph features.}

\item \textbf{Attribute similarity}: Nodes with similar attributes are likely to have similar representations. GMI~\citep{DBLP:GMI} maximizes the mutual information between node attributes and high-level representations, and Pretrain GNN \citep{DBLP:Pretrain_GNN} applies attribute masking to help capture domain-specific knowledge. \tmlr{GCA \citep{zhu2021GCA} corrupts node features by adding more noise to unimportant node features, to enforce the model to recognize underlying semantic information.}
\end{itemize}

Given the above subgraph instance discrimination objective with GNN backbones, NCEloss \citep{DBLP:NCE_of_Unnormalized_Statistical_Models,DBLP:notes_on_nce_and_negative_sampling,DBLP:NCE,DBLP:Understanding_negative_sampling} is applied to optimize the model's parameters.
In addition, Graph Autoencoder, which is another popular graph self-supervised  approach by reconstructing useful graph information, has also been extensively studied. For example, GraphMAE~\citep{DBLP:conf/kdd/HouLCDYW022} focuses on feature reconstruction with a masking strategy and scaled cosine error. GraphMAE2~\citep{DBLP:conf/www/HouHCLDK023} designs strategies of multi-view random re-mask decoding and latent representation prediction to regularize the reconstruction process. S2GAE \citep{DBLP:conf/wsdm/TanL0CL0H23} randomly masks a portion of edges and learns to reconstruct these missing edges with a novel masking strategy. SeeGera \citep{DBLP:conf/www/LiYSLG23} adopts a novel hierarchical variational framework.

\tmlr{Numerous studies have demonstrated the success of enhancing a model's generalizability by devising advanced similarity functions. However, some designs may require extensive tuning, preventing these advanced techniques from adapting to settings beyond the scope of their experiments. This problem is especially prevalent in the unsupervised learning setting, where the ideal protocol is that the learned node embeddings, which are the output of the unsupervised contrastive model, should be applicable to downstream tasks upon convergence. The key difference between Contrast-Reg and other algorithms, such as GCA, GDCL, and DiGCL, lies in the fact that the latter methods are intended to incorporate structures that are critical to the performance of the downstream tasks, while Contrast-Reg ensures that minimizing the contrastive loss results in high-quality representations that enhance the accuracy in downstream tasks. Moreover, Contrast-Reg can naturally function as a plugin to these advanced graph contrastive learning algorithms to improve their performance.} 
\paragraph{Expected Calibration Error.}
Expected Calibration Error (ECE) \citep{ECE} is a metric employed to quantify the calibration between \textit{confidence} (largest predicted probability) and \textit{accuracy} in a model. Calibration refers to the consistency between a model's predicted probabilities and the actual outcomes. When a model is mis-calibrated, its generalization to unseen data in supervised learning tasks is likely to be poor \citep{label-sm, label-sm2, ECE, mixup}. We propose using ECE to evaluate the quality of node embeddings generated by unsupervised graph contrastive learning models. This calibration offers insights into addressing the challenge that applying graph contrastive learning algorithms to downstream tasks does not always yield optimal results.


\paragraph{Regularization for graph representation learning.} 

GraphAT~\citep{feng2019graphat} and BVAT~\citep{deng2019bvat} introduce adversarial perturbations \(\frac{\partial f}{\partial x}\) to the input data \(x\) as regularizers to obtain more robust models. GraphSGAN~\citep{ding2018graphsgan} generates fake input data in low-density regions by incorporating a generative adversarial network as a regularizer. P-reg~\citep{yang2020rethinking} leverages the smoothness property in real-world graphs to enhance GNN models. Graphnorm \citep{DBLP:conf/icml/CaiLXHL021} proposes a novel feature normalization method. ALS \citep{graph-lp} employs label propagation to adaptively integrate label smoothing into GNN training. G-Mixup \citep{g-mixup} uses graphons as a surrogate to apply mixup techniques to graph data.

The above regularizers are designed for general representation generalizability, while \reg\ is specifically intended to address the mis-calibration problem in the unsupervised graph contrastive learning optimization process and its performance on downstream tasks. It is worth noting that \reg\ could be used in conjunction with the aforementioned regularization techniques.

\section{Preliminaries}\label{section: theory}
We begin by introducing the concepts and foundations of graph contrastive learning (GCL). Let a graph $\mathcal{G}=(\mathcal{V}, \mathcal{E})$ be denoted, where $\mathcal{V}={v_1,v_2,\cdots,v_n}$ and $\mathcal{E}$ represent the vertex set and edge set of $\mathcal{G}$, and the node feature vector of node $v_i$ be $x_i$. Our objective is to learn node embeddings through unsupervised graph contrastive learning and subsequently apply simple classifiers leveraging these embeddings for various downstream tasks, such as node classification, link prediction, and graph clustering.

\paragraph{Graph contrastive learning}
Given a graph $\mathcal{G}$ with node features $\mathcal{X}=(x_1,x_2,\cdots,x_n)$, the aim of graph contrastive learning is to train an encoder $f:(\mathcal{G},\mathcal{X})\to\mathbb{R}^d$ for all input data points $v_i\in \mathcal{V}$ with node feature vector $x_i$ by constructing positive pairs $(v_i,v^+_i)$ and negative pairs $(v_i,v^-_{i1},\cdots,v^-_{iK})$.

The encoder $f$ is typically implemented using Graph Neural Networks (GNNs). Specifically, let $h_i$ represent the output embedding of the encoder $f$ for node $v_i$, $h_i^{(k)}$ as the embedding at the $k$-th layer, and $h_i^0=x_i$. The output of the encoder $f$ is then iteratively defined as:
\begin{equation}
\begin{aligned}
    m_i^{(k)}&=\mathrm{Aggregate}_k\left(\left\{h_j^{(k-1)}:v_j\in\mathcal{N}(v_i)\right\}\right) \\
    h_i^{(k)}&=\mathrm{RELU}\left(W^k\cdot\mathrm{Update}\left(h_i^{(k-1)}, m_i^{(k)}\right)\right),
\end{aligned}
\end{equation}
where $\mathcal{N}(v_i)$ the set of nodes adjacent to $v_i$, and $\mathrm{Aggregate}$ and $\mathrm{Update}$ are the aggregation and  update function of GNNs \citep{MPNN}. $h_i$ is the last layer of $h_i^{(k)}$ for node $v_i$. 

Leveraging various types of similarity as pseudo subgraph instance discrimination labels, graph contrastive learning constructs positive and negative pairs to train the embedding $h_i$ by optimizing the loss on these pairs. 
The most commonly employed loss is the NCEloss:
\begin{align}\label{eq:nce-loss}
	\hat{\mathcal{L}}_{nce}\!=~\frac{1}{M}~\sum_{i=1}^{M}~ \big[
		-\log\sigma(h_i^T h^+_i)~+
		\sum_{k=1}^{K}\log\sigma(h_i^T h^-_{ij})\big]
\end{align}
with $M$ samples $\left(\!v_i^{},\!v_i^+,\!v_{i1}^-,\!\cdots,\!v_{iK}^- \!\right)_{i=1}^M$ in empirical setting where $\sigma(\cdot)$ is the sigmoid function.

\paragraph{Calibrating graph contrastive learning by the expected calibration error (ECE)}
Expected Calibration error measures the degree to which the model output probabilities match
ground-truth accuracy in supervised tasks~\citep{ECE_v1, ECE}. 
It's defined as the expectation of absolute difference between the \textit{largest predicted probability (confidence)} and its corresponding \textit{accuracy},
\begin{equation}\label{eq:ece}
\begin{aligned}
    	\mathrm{ECE}&=
    \mathbb{E}_{(v,v^{\prime})\in S}\left[\vert p(v,v^{\prime})-{acc(v,v^{\prime})}\vert\right]
\end{aligned}
\end{equation}
In this paper, we extend the use of ECE to evaluate the quality of graph contrastive learning. Specifically, we can compare the predicted probability of a positive pair (i.e., a pair of nodes with the same label) with the true probability that the pair belongs to the same class. We can also compare the predicted probability of a negative pair (i.e., a pair of nodes with different labels) with the true probability that the pair belongs to different classes. By calculating the differences between the predicted and true probabilities for both positive and negative pairs, we can quantify the overall mis-calibration of the model and identify areas for improvement. The formal definition is presented as follows. 
The largest predicted probability, $p(v,v^\prime)$, is determined as

\begin{equation}\label{eq:confident}
p(v, v^\prime) = \left\{
	\begin{aligned}
    \sigma(h_{v^\prime}\cdot h_v), & \qquad &  (v,v^\prime) \text{ as positive pair}\\
    1- \sigma(h_{v^\prime}\cdot h_v), & \qquad &  (v,v^\prime) \text{ as negative pair}
	\end{aligned}
    \right.
\end{equation}
where $h_v$ and $h_{v^\prime}$ are the embeddings for the target node $v$ and the selected sample $v^\prime$, respectively, and $\sigma(\cdot)$ is the sigmoid function. The corresponding accuracy, $acc(v,v^\prime)$, is defined as follows:
\begin{equation}\label{eq:acc}
	acc(v,v^\prime)=\left\{
	\begin{aligned}
	\mathbb{I}(v, v^\prime), & &  (v,v^\prime) \text{ as positive pair}\\
    1- \mathbb{I}(v, v^\prime), & \qquad &  (v,v^\prime) \text{ as negative pair}
	\end{aligned}
    \right.
\end{equation}
where $\mathbb{I}$ is an indicator function denoting whether $v$ and $v^{\prime}$ has the same label, the positive/negative pairs are pseudo-labels for training the contrastive learning algorithm which may not equal to the true label. In this case, when two nodes are selected as positive pairs, $acc(v,v^\prime)$ is equal to 1 if $v^\prime$ and $v$ belong to the same class and equal to 0 otherwise; when the two nodes are selected as negative pairs, $acc(v,v^\prime) = 1 - \mathbb{I}(v, v^\prime)$.

Using the ECE metric in this way allows us to evaluate the quality of the embeddings outputted by the model ($\sigma(h_{v^\prime} \cdot h_v)$) and their accuracy in downstream tasks ($acc(v,v^\prime)$), such as node classification. 
We can improve the general performance and utility of graph contrastive learning algorithms by detecting regions of mis-calibration and developing novel techniques for enhancing calibration performance.


\section{Methodology}\label{section:contrast_reg}
To calibrate the model, we first adapt the expected calibration error (ECE) formula for the graph contrastive learning setting (Section \ref{sec:adaption}). In order to ensure that minimizing the contrastive loss leads to high-quality representations that improve accuracy in downstream tasks, we propose a regularization term designed to lower the ECE value and enhance the model's generalizability (Section \ref{sec:reg}). We also provide theoretical evidence to support the effectiveness of the proposed regularization term.

\subsection{Empirical Calibration Reveals Limitations of Existing Graph Contrastive Learning Algorithms}\label{sec:adaption}
To calibrate the existing graph contrastive learning algorithms, 
we measure the degree of calibration using the expected calibration error (ECE) metric, which is defined by Equation \ref{eq:ece}, \ref{eq:confident}, \ref{eq:acc}:
\begin{equation}
\begin{aligned}\label{eq:cal-ece}
    \mathrm{ECE}=& r^+\cdot  \Bigg(\left(1-\mathbb{E}_{acc(v,v^{\prime}_+)=1}[p(v,v^{\prime}_+)]\right) q^+ &(\textit{true positive})\\
    & + \mathbb{E}_{acc(v,v^{\prime}_+)=0}[p(v,v^{\prime}_+)]\cdot \left(1-q^+\right)\Bigg) &(\textit{false positive})\\
    +&r^-\Bigg(\mathbb{E}_{acc(v,v^{\prime}_-)=1}[p(v,v^{\prime}_-)]\cdot q^{-}&(\textit{false negative})\\
    &+ \left(1-\mathbb{E}_{acc(v,v^{\prime}_-)=0}[p(v,v^{\prime}_-)]\right)\cdot (1-q^{-})\Bigg)&(\textit{true negative}),
\end{aligned}
\end{equation}
where $q^+$ and $q^-$ are the probabilities that the node $v'$ has the same label as node $v$ in positive and negative sampling, respectively; $r^+$ and $r^-$ are the ratios of sampling positive and negative samples, with $r^++r^-=1$. In this study, we set $r^+=r^-=0.5$. 

Building upon this formulation, we propose the following claim that takes into account the positive and negative pair construction assumptions to analyze the potential issues that lead to the mis-calibration between the decrease in graph contrastive learning training loss and the degradation of downstream task performance.

\begin{figure}[t]
	\centering
    \begin{subfigure}[b]{0.40\linewidth}
		\centering
		\includegraphics[width=\linewidth]{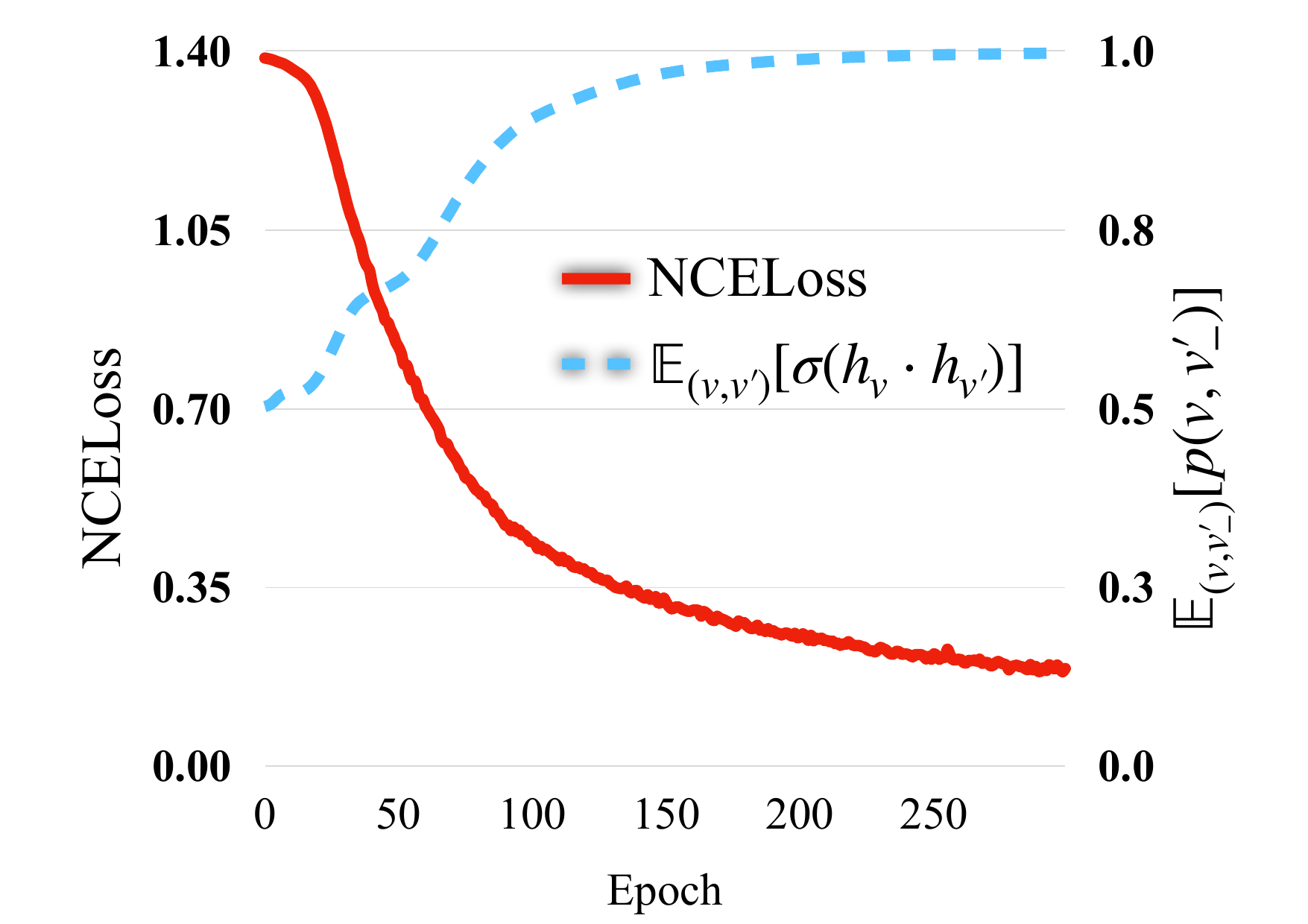}
		\caption{NCEloss and $\mathbb{E}_{(v,v^{\prime})}[\sigma(h_v\cdot h_{v^{\prime}})]$ over epochs}
    \label{fig:conf_norm}
    \end{subfigure}
    \begin{subfigure}[b]{0.40\linewidth}
		\centering
		\includegraphics[width=\linewidth]{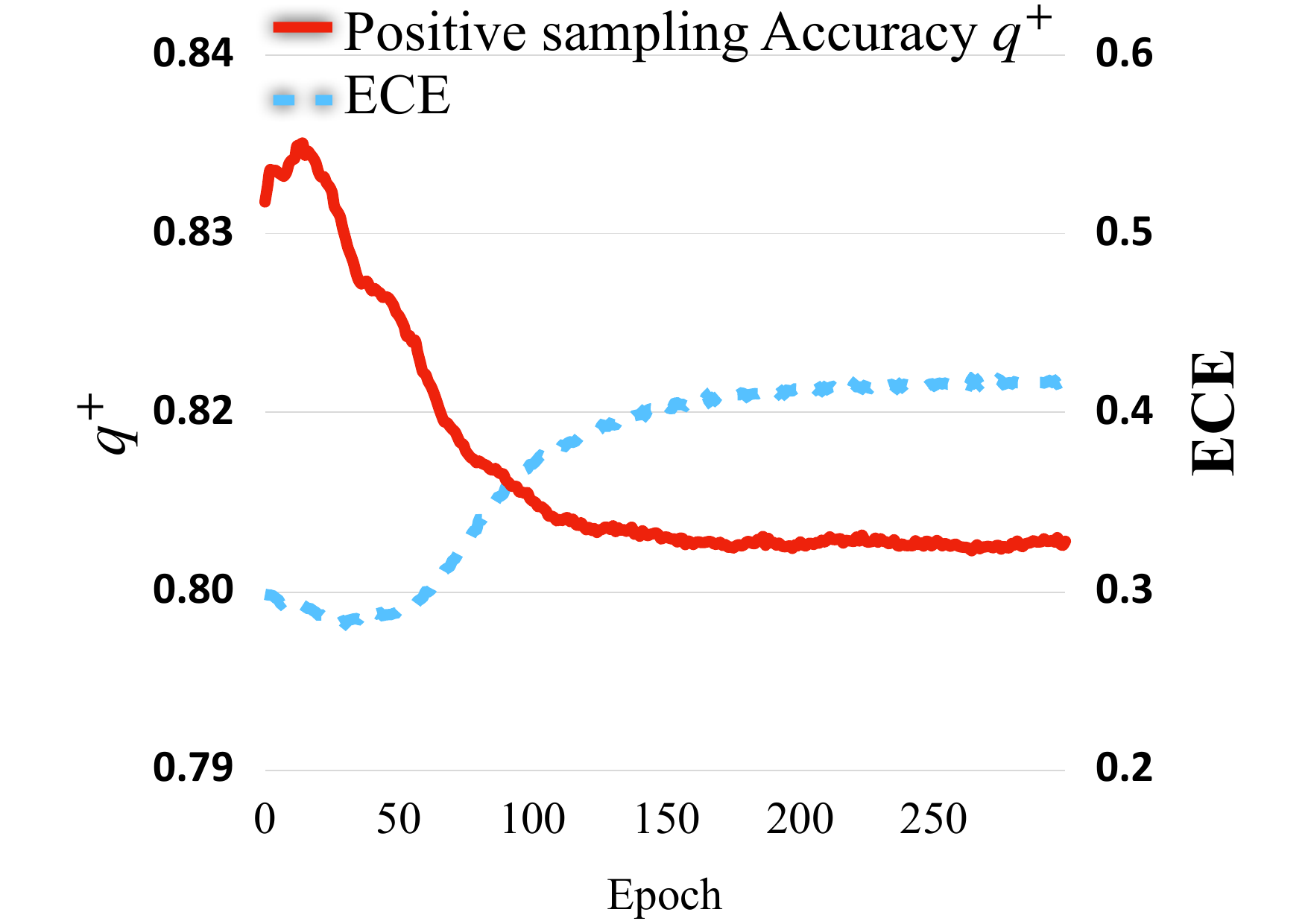}
		\caption{ECE and $q^+$ over epochs}
		\label{fig:ece_prob_single}
	\end{subfigure}
 \caption{\tmlr{Calibrating Graph Contrastive Learning (LC (Alg. \ref{alg:LC}) w/o \reg~  on the Pubmed dataset)}}
	\label{fig:ece_single}
\end{figure}

\begin{claim}\label{cl:ece}
Under the assumption that negative sampling is uniformly sampled, and positive sampling is sampled based on the calculated distance between pairwise embeddings, ECE is positively correlated with the expectation of the prediction value for randomly sampled pairs $\mathbb{E}_{v,v^{\prime}}[\sigma(h_v\cdot h_{v^{\prime}})]$, and negatively correlated with the probability $q^+$ of $v^{\prime}_+$ having the same label as $v$ in positive sampling.
\end{claim}
We provide a thorough analysis of this claim in Appendix \ref{appendix:ece}.
To investigate the changes in these two factors and the ECE value over the process of minimizing the NCEloss (Equation \ref{eq:nce-loss}), we conduct an illustrative experiment with an existing graph contrastive learning algorithm and present the relevant values in Figure \ref{fig:ece_single}. In Figure \ref{fig:conf_norm}, we show that as the NCEloss (red solid line) is minimized over the epochs, the expectation of the prediction value for randomly sampled pairs $\mathbb{E}{(v,v^{\prime})}[\sigma(h_v\cdot h{v^{\prime})}]$ (blue dashed line) increases. Moreover, Figure \ref{fig:ece_prob_single} demonstrates the model's challenge in identifying genuine positive pairs, with the probability $q^+$ initially increasing before gradually declining (red solid line), and the ECE value first decreasing before rising (blue dashed line). These findings suggest that, as the epochs progress, merely reducing the NCEloss is insufficient to enhance the accuracy of downstream tasks.
Our empirical calibration shows that the model learns certain spurious features that reduce the loss but are ultimately harmful to downstream tasks when solely minimizing the NCEloss.
The above analysis emphasizes the need for graph contrastive learning algorithms to effectively quantify the relationships between learned embeddings and the accuracy of downstream tasks.
In the next section, we present our approach to mitigate the possible risks associated with spurious feature learning in graph contrastive learning algorithms and show its effectiveness through a number of experiments.

\subsection{Proposed Regularization Term: Contrast-Reg}\label{sec:reg}
To ensure that minimizing the NCEloss aligns with downstream task accuracy, we propose a contrastive regularization term, denoted as \textbf{\reg}, given by:
\begin{equation}\label{eq:reg}
\mathcal{L}_{reg}=-\mathop{\mathbb{E}}_{h,\tilde{h}}\left[\log\sigma(h_i^TW\mathbf{r})+\log\sigma(-\tilde{h}^T_iW\mathbf{r})\right],
\end{equation}
where $\mathbf{r}$ is a random vector uniformly sampled from $\left(0,1\right]$, $W$ is a trainable parameter, and $\tilde{h}$ is the noisy feature generated by different data augmentation techniques, such as those used in the previous literature~\citep{DBLP:SimCLR, DBLP:DGI}. In Section~\ref{section:model}, we will discuss how we calculate the noisy features in the GNN setting. 
In the following, we will introduce the impact of Contrast-Reg on the ECE value and the generalizability of the GNN model.




\paragraph{ECE decreases by incorporating Contrast-Reg}

\tmlr{In Claim 4.1 and Appendix A.1, it is proven that the ECE value is positively correlated with $\mathbb{E}{(v,v^\prime)}[\sigma(h_v,h_{v^\prime})]$ and inversely correlated with $q^+$. In addition, Theorem 2 in Appendix A.3 shows that Contrast-Reg reduces the variance of the learned node embeddings' norm $\Vert h\Vert$, thereby preventing the loss from decreasing due to exploding embedding norms. As a result, the norm values are lower than those obtained with the vanilla contrastive loss, leading to a reduction in $\mathbb{E}{(v,v^\prime)}[\sigma(h_v,h_{v^\prime})]$ and an increase in representation quality. With the improvement in the representation quality, $\mathbb{E}{(v,v^\prime)}[\sigma(h_v,h_{v^\prime})]$ decreases and $q^+$ increases, resulting in a decreased ECE value.}
Furthermore, we conducted an empirical study to examine the impact of Contrast-Reg on the ECE value and investigate the two factors that cause miscalibration. Figure \ref{fig:ece-demon} shows that the expectation of the prediction value for randomly sampled pairs $\mathbb{E}_{(v,v^{\prime})}[\sigma(h_v\cdot h_{v^{\prime}})]$ with Contrast-Reg increased much more slowly compared to the vanilla NCEloss (green solid line compared to the red solid line). Figure \ref{fig:acc_ece-pubmed} presents the changes in positive sampling accuracy $q^+$ over the epochs of the vanilla NCEloss and NCEloss with Contrast-Reg, where Contrast-Reg helps $q^+$ increase, while $q^+$ trained by vanilla NCEloss decreased after the initial increases. These two factors contribute to the different ECE changes in the vanilla NCEloss and NCEloss with Contrast-Reg. The red dashed line indicates that the ECE value increases after the initial decreases, while the green dashed line shows that the ECE value decreases slightly. The comparison demonstrates that applying Contrast-Reg alleviates the miscalibration in representation learning, ensuring that minimizing the contrastive loss results in high-quality representations with increased accuracy in downstream tasks, rather than overfitting certain spurious features. We  provide a detailed comparison of the impact on the ECE value between models with and without \reg\   in Appendix \ref{sec:ece_result} across different datasets.

\begin{figure}[t]
   \centering
     \begin{subfigure}[b]{0.32\linewidth}
		\centering
		\includegraphics[width=\linewidth]{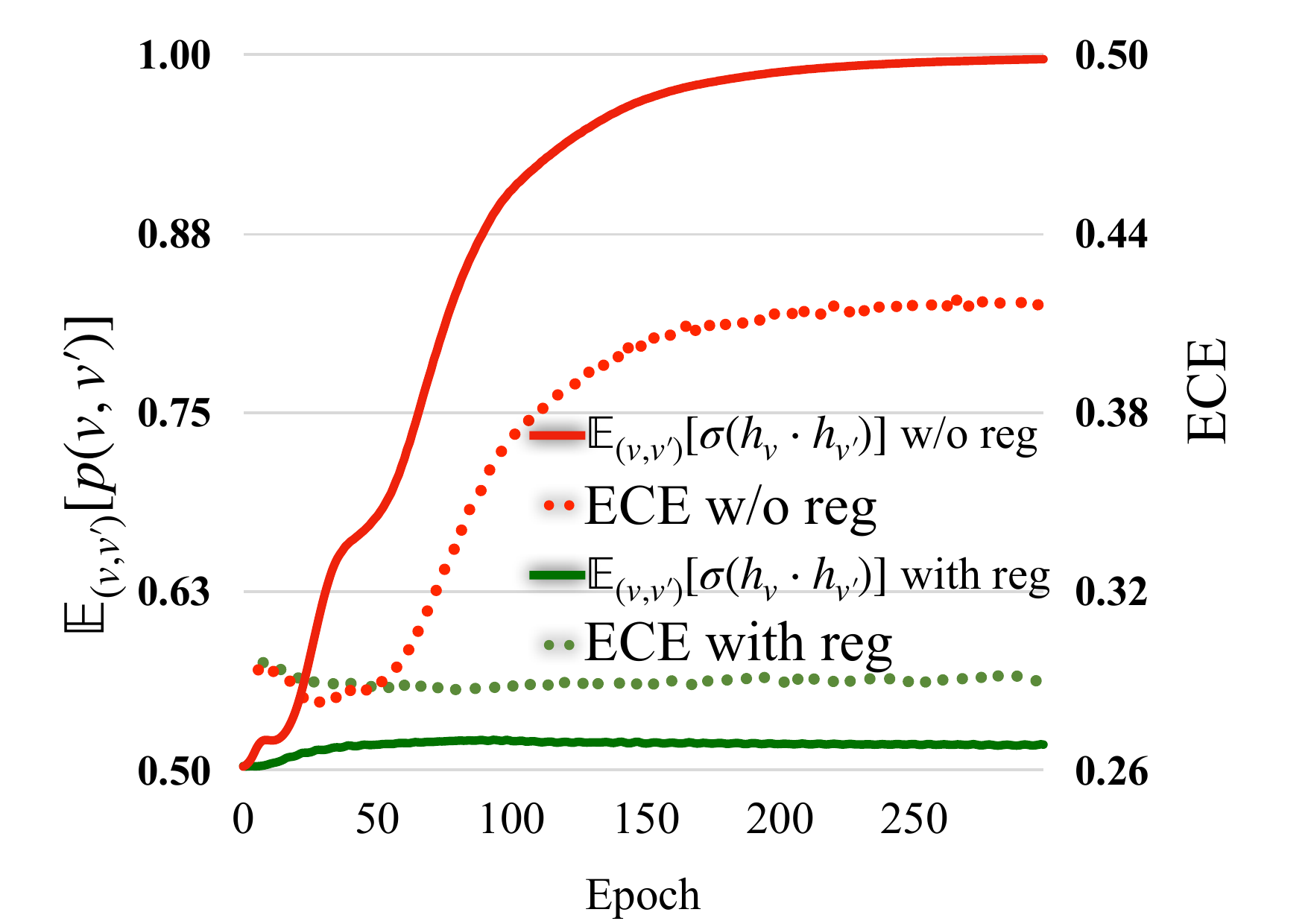}
		\caption{{Influence of \reg\ on \\ECE and $\mathbb{E}_{(v,v^{\prime})}[\sigma(h_v\cdot h_{v^{\prime}})]$}}
		\label{fig:ece-demon}
	\end{subfigure}
  \centering
    \begin{subfigure}[b]{0.32\linewidth}
		\centering
		\includegraphics[width=\linewidth]{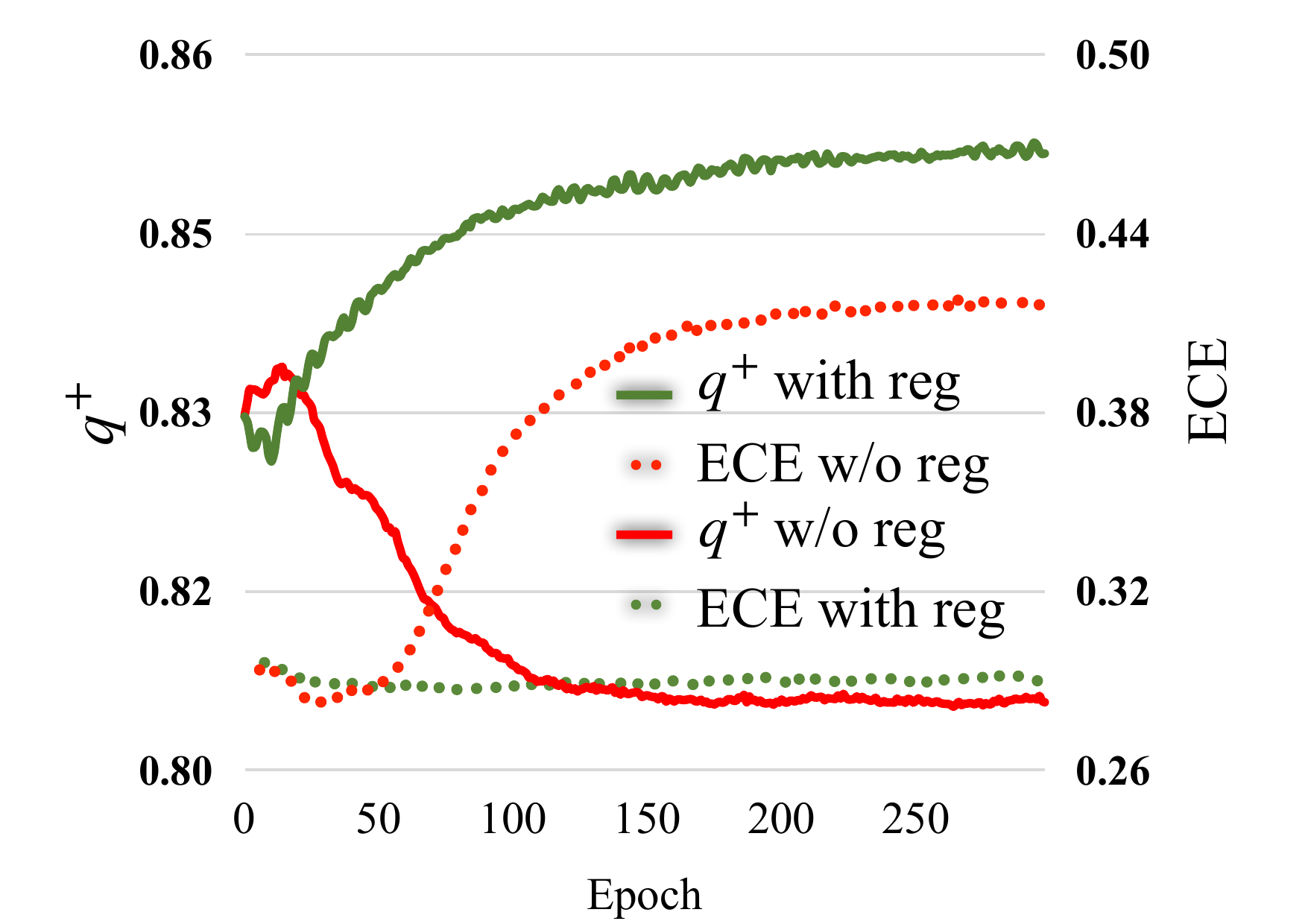}
		\caption{{Influence of \reg\ on \\ECE and $q^+$}}
		\label{fig:acc_ece-pubmed}
	\end{subfigure}
     \begin{subfigure}[b]{0.32\linewidth}
		\centering
		\includegraphics[width=\linewidth]{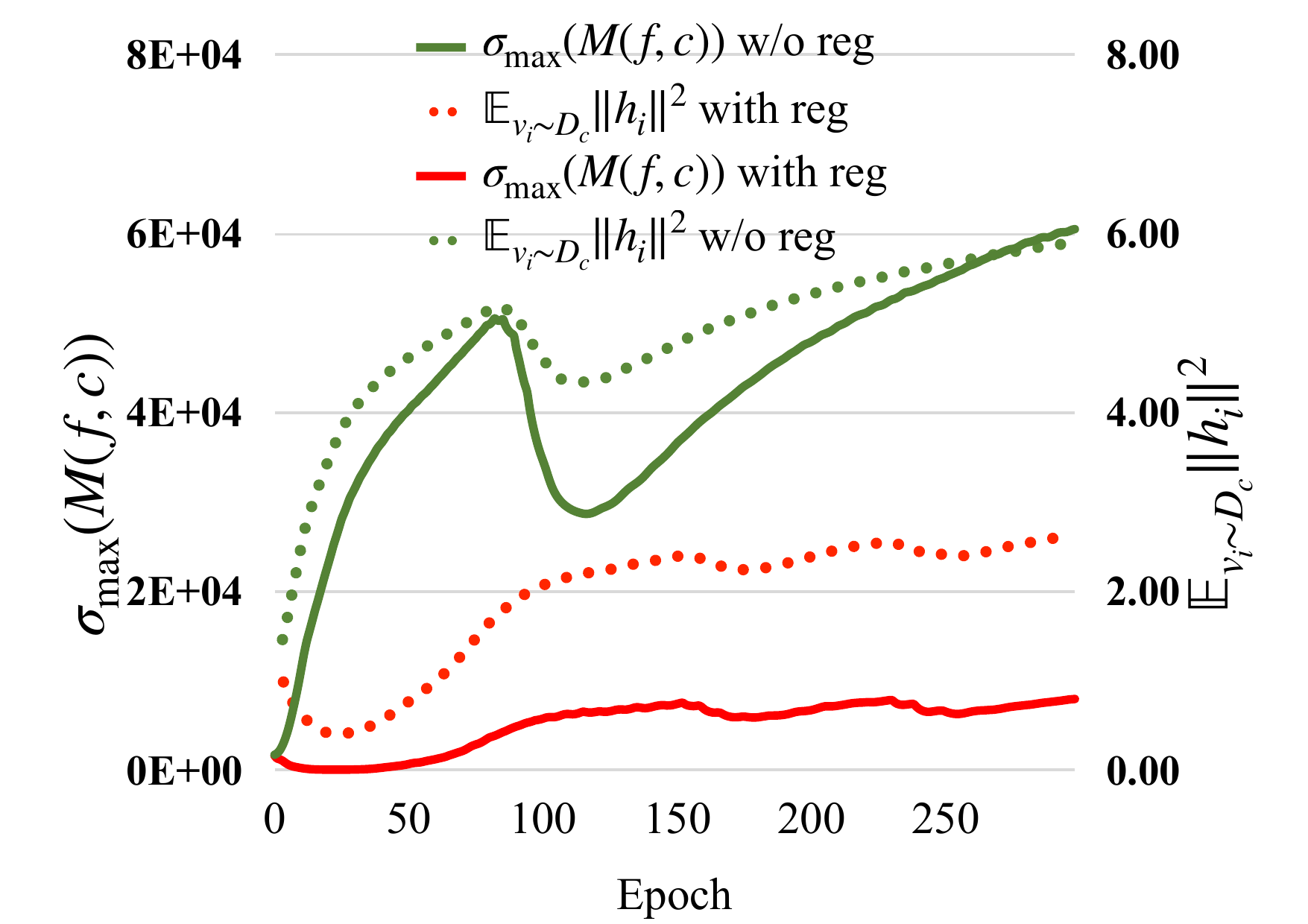}
		\caption{{Influence of \reg\ on \\$\sigma_{\mathrm{max}}(M(f,c))$ and $\mathbb{E}_{v_i\sim D_c}\Vert h_i\Vert^2$}}
		\label{fig:reg_sf}
	\end{subfigure}
         \caption{\tmlr{Effects of \reg~(LC (Alg. \ref{alg:LC}) w/ \& w/o Contrast-Reg on Pubmed)}}
	\label{fig:ece_analyze}
\end{figure}


\paragraph{Generalizability improved by incorporating Contrast-Reg}
In Section \ref{sec:wo_reg}, we present empirical evidence supporting the effectiveness of Contrast-Reg's ability to enhance the generalizability of the GNN model through an ablation study. Furthermore, we analyze the impact of Contrast-Reg on the generalizability of graph contrastive learning algorithms using the Rademacher
complexity \citep{DBLP:Theoretical_analysis}.
In Theorem~\ref{generalization}, we offer performance guarantees for the learned graph embeddings outputted by the GNN function class $\mathcal{F}={f}$ with the unsupervised loss function \tmlr{$\mathcal{L}_{nce}$} on the downstream average classification task \tmlr{$\mathcal{L}_{sup}^\mu(\hat{f})$}. Detailed settings can be found in Appendix \ref{appendix:detailed_theorem}.
Assume that $f$ is bounded, i.e., $\norm{h_i}\le R$ with $R>0$.
 Let $c,c^\prime$ be two classes sampled independently from latent classes $\mathcal{C}$ with distribution $\rho$. Let $\tau=\mathbb{E}_{c,c^\prime\sim\rho^2}\mathbb{I}\left(c=c^\prime\right)$ be the probability that $c$ and $c^\prime$ come from the same class. Let $\mathcal{L}_{nce}^{\ne}(f)$ be the NCEloss when negative samples come from different classes. We have the following theorem.

\begin{theorem}\label{generalization}
	 $\forall f\in\mathcal{F}$, with probability at least $1-\delta$, 
	 \begin{equation}\label{eq:main}
	 	\mathcal{L}_{sup}^\mu(\hat{f})\le \mathcal{L}^{\ne}_{nce}(f)+\beta s(f)+\eta Gen_M,
	 \end{equation}
	 where {\small 
	\(
		Gen_M=\frac{8R\mathcal{R}_\mathcal{S}(\mathcal{F})}{M}-8\log(\sigma(-R^2))\sqrt{\frac{\log\frac{4}{\delta}}{2M}}
		=O\left(R\frac{\mathcal{R}_{\mathcal{S}}(\mathcal{F})}{M}+R^2\sqrt{\frac{\log\frac{1}{\delta}}{M}}\right)
	\)},
         $\beta=\frac{\tau}{1-\tau}$, $\eta=\frac{1}{1-\tau}$, and 
         {\small\(
        s(f)=4\sqrt{\mathbb{E}_{(v_i,v_j)\sim\mathcal{D}_{sim}(v_i,v_j)}\left[(h_i^Th_i)^2\right]}\)}, 
\end{theorem}

In Equation (\ref{eq:main}), $Gen_M$ represents the generalization error in terms of \textit{Rademacher complexity} and will converge to 0 when the encoder function $f$ is bounded and the number of samples $M$ is sufficiently large.
The theorem above indicates that only when $\beta s(f)+\eta Gen_M$ converges to 0 as $M$ increases, will the encoder $\hat{f}=\arg\min_{f\in\mathcal{F}}\hat{\mathcal{L}}_{nce}$ selected perform well in downstream tasks.

Next, we will analyze the impact of Contrast-Reg on the condition in the aforementioned theorem, $\beta s(f)+\eta Gen_M$, by reformulating $s(f)$ as follows:
\begin{equation*}\label{eq:sf}
\begin{split}
	s(f)=&4\sqrt{\mathbb{E}_{(v_i,v_j)\sim\mathcal{D}_{sim}(v_i,v_j)}\left[h_i^Th_jh_j^Th_i\right]}\\
	=&4\sqrt{\mathbb{E}_{c\sim\rho}\left[\mathbb{E}_{v_i\sim\mathcal{D}_c}\left[h_i^T\mathbb{E}_{x_j\sim\mathcal{D}_c}\left[h_jh_j^T\right]h_i\right]\right]}\\
 	\le &4\sqrt{\mathbb{E}_{{c}\sim\rho}\left[\norm{M(f,c)}_2\mathbb{E}_{v_i\sim\mathcal{D}_c} \norm{h_i}^2\right]},
\end{split}
\end{equation*}
where $M(f,c)\coloneqq \mathbb{E}_{v_i\sim\mathcal{D}_c}\left[h_ih_i^T\right]$. 

The advantage of incorporating Contrast-Reg into graph contrastive learning lies in its ability to reduce both the largest singular value of matrix $M(f,c)$ ($\sigma_{\mathrm{max}}(M(f,c))$) and the expectation of the embedding norm within the same class, $\mathbb{E}_{v_i\sim D_c}[\Vert h_i\Vert^2]$ (as illustrated in Figure \ref{fig:reg_sf}). This reduction leads to a decrease in the upper bound of $s(f)$, ultimately yielding a lower value for the term $\beta s(f)+\eta Gen_M$ compared to vanilla graph contrastive learning. The reduction in this term promotes better alignment with the performance of downstream tasks while simultaneously minimizing the training loss. Further experiments on the generalizability of the Contrast-Reg approach can be found in Section \ref{sec:wo_reg}, and a more detailed explanation on lowering the term $s(f)$ is provided in Appendix \ref{appendix:sf}.

\section{A Contrastive GNN Framework}\label{section:model}
\tmlr{
Algorithm~\ref{alg:framework} presents the graph contrastive learning framework and how Contrast-Reg is plugged into the framework. The framework employs a GNN model, such as GCN \citep{kipf2017semi}, GAT \citep{velickovic2018graph}, and GIN \citep{xu2018how}, to obtain node embeddings. To train the parameters of the GNN backbone, we generate contrastive pairs using the functions $SeedSelect$ and $Contrast$. In addition, we incorporate the Contrast-Reg term into the optimization process, along with the contrastive loss on these pairs. After the model convergence, the embeddings are directly delivered to the downstream tasks.}
\begin{algorithm}[H] \small
	\SetAlgoLined
	\KwIn{Graph $\mathcal{G}=(\mathcal{V},\mathcal{E})$, node attributes $\mathcal{X}$, a GNN model \(f: \mathcal{V} \rightarrow \mathbb{R}^H\), the number of epochs $e$;}
	\KwOut{A trained GNN model $f$;}
	Initialize training parameters;\\
	\For{$epoch \gets 1$ \KwTo $e$}{
		$\mathcal{C}$=SeedSelect($\mathcal{G}$, $\mathcal{X}$, $f$, $epoch$); \\
		$\mathcal{P}$=Contrast($\mathcal{C}$, $\mathcal{G}$, $\mathcal{X}$, $f$);\\
		loss = NCEloss($\mathcal{P}$) + \reg($\mathcal{G}$, $\mathcal{X}$, $f$);\\
		Back-propagation and update \(f\);
	}
	\caption{Graph Contrastive Learning Framework}
	\label{alg:framework}
\end{algorithm}
\tmlr{The functions $SeedSelect$ and $Contrast$ are responsible for selecting the seed nodes and sampling the positive/negative pairs for these seeds while incorporating various priors for both structural and attribute aspects of the graph. To illustrate the graph contrastive learning framework, we utilize two simple graph contrastive learning algorithms (\textbf{ML} and \textbf{LC}).}
\begin{itemize}
    \item \textbf{LC: Structural similarity} 
    We adopt clustering methods to detect communities in a graph and generate positive/negative pairs in the node representation space \citep{community}. 
    The $SeedSelect$ function is implemented using curriculum learning, following the design of AND \citep{DBLP:AND}. 
    Initially, $SeedSelect$ utilizes node embeddings to select a node set $\mathcal{C}$ with low entropy, reducing randomness and uncertainty during early training stages. 
    As training progresses, $SeedSelect$ gradually expands $\mathcal{C}$ by adding more nodes. 
    For each node $x_i$ in $\mathcal{C}$, $Contrast$ generates a positive node $x_i^+$ that is highly similar, and a randomly sampled negative node $x_i^-$, returning a tuple of their representations: $\left\{(f(x_i),f(x_i^+),f(x_i^-))\right\}_{x_i\in\mathcal{C}}$.
    \item \textbf{ML: Attribute similarity}
    In ML, we assume that nodes with similar attributes should have similar representations to preserve the attribute information. Following the design of DIM \citep{DBLP:deep_infomax} and GMI \citep{DBLP:GMI}, we adopt a multi-level representation approach.
    Initially, $SeedSelect$ includes all nodes in the node set $\mathcal{G}$. For each seed node $x_i$, $Contrast$ uses the node itself as the positive node and randomly samples a negative node.
    The positive representation of the seed node $x_i$ is generated by adding an additional GNN layer upon $f$, denoted as $g$. The tuple containing the representations is denoted as $\left\{(f(x_i),g(x_i),f(x_i^-))\right\}_{x_i\in\mathcal{G}}$. 
\end{itemize}
\tmlr{Advanced graph contrastive learning algorithms can also be easily implemented using this framework. As an example, Graph Contrastive Analysis (GCA) \citep{zhu2021GCA} algorithm can be implemented by making a simple modification to the $Contrast$ function found in Appendix \ref{ap:ml-lc}. Overall, Contrast-Reg serves as a highly effective plugin regularizer for graph contrastive learning with various backbones, and with different $Contrast$ and $SeedSelect$ implementations.}

\section{Experimental Results}\label{section:exp}

{We begin by introducing the experimental settings in Section \ref{sec:exp_setting}. Section \ref{sec:main_result} presents the main results across various downstream tasks. Moreover, we assess the benefits of Contrast-Reg through ablation studies.}

\subsection{Experiment Settings}\label{sec:exp_setting}
\paragraph{Downstream tasks}
We conduct our experiments on three distinct downstream tasks, namely node classification, graph clustering, and link prediction. The experimental procedure consists of two stages: first, we utilize positive and negative contrastive pairs to train the GNN models in an unsupervised manner, obtaining the node embeddings. Subsequently, we apply these embeddings to the downstream tasks by integrating additional straightforward models. For example, we employ multiclass logistic regression for the node classification, $k$-means for graph clustering, and a single MLP layer for link prediction. Moreover, we conduct experiments in the pretrain-finetune paradigm, as this approach constitutes a significant component of graph contrastive learning \citep{DBLP:GCC}. We first employ a large graph to train the GNN models, followed by finetuning such models and train a simple downstream classifier on a separate graph.

\paragraph{Training details}
The datasets we employ encompass citation networks, web graphs, co-purchase networks, and social networks. Comprehensive statistics for these datasets can be found in Appendix~\ref{sec:exp_detail}.
For Cora, Citeseer, Pubmed, ogbn-arxiv, ogbn-products, and Reddit, we adhere to the standard dataset splits and conduct 10 different runs with fixed random seeds ranging from 0 to 9. For Computers, Photo, and Wiki, we randomly divide the train/validation/test sets, allocating 20/30/all remaining nodes per class, in accordance with the recommendations in the previous literature \citep{DBLP:pitfalls}. We measure performance across 25 (5×5) different runs, comprising 5 random splits and 5 fixed-seed runs (from 0 to 4) for each random split. The hyperparameter configurations can be found in Appendix \ref{sec:exp_detail}.

\subsection{Main Results}\label{sec:main_result}

\begin{table*}[t]
	\caption{Downstream task: node classification}
	\label{tab:node_classification}
	\vspace{-3mm}
	{
 \resizebox{\textwidth}{15mm}{
	\begin{tabular}{lcccccccccc}
		\toprule
		Algorithm & Contrast-Reg&Cora & Citeseer & Pubmed & Wiki & Computers & Photo & ogbn-arxiv& ogbn-products & Reddit  \\
		\midrule
		GCN & &81.54\PM{0.68}& 71.25\PM{0.67}& 79.26\PM{0.38}& 72.40\PM{0.95}& 79.82\PM{2.04}& 88.75\PM{1.99}& \textbf{71.74}\PM{0.29}& 75.64\PM{0.21}&94.02\PM{0.05} \\
		\midrule
		node2vec & &71.07\PM{0.91}& 47.37\PM{0.95}& 66.34\PM{1.40}& 58.76\PM{1.48}& 75.37\PM{1.52}&83.63\PM{1.53}& 70.07\PM{0.13} & 72.49\PM{0.10}&93.26\PM{0.04} \\
		DGI & &81.90\PM{0.84}& 71.85\PM{0.37}& 76.89\PM{0.53}& 63.70\PM{1.43}& 64.92\PM{1.93}& 77.19\PM{2.60}& 69.66\PM{0.18}& \textbf{77.00}\PM{0.21}& 94.14\PM{0.03}\\
		GMI & &80.95\PM{0.65}& 71.11\PM{0.15}& 77.97\PM{1.04}& 63.35\PM{1.03}& 79.27\PM{1.64}&87.08\PM{1.23}& 68.36\PM{0.19} & 75.55\PM{0.39}& 94.19\PM{0.04}\\
            GCA && 82.51\PM{0.75} & 70.83\PM{0.89} & 85.69.\PM{1.03} & 76.38\PM{1.25} & 87.47\PM{0.49} &  92.07 \PM{1.0} & OOM & OOM & OOM\\
            GraphMAE && 84.14\PM{0.50} & 73.06\PM{0.34} & 80.98\PM{0.41} & 77.40\PM{0.18} & 80.77\PM{1.43} &  88.93 \PM{1.58} & 71.47\PM{0.27} & OOM & OOM\\
            \midrule
            GCA &$\checkmark$& 83.90\PM{1.22} & 72.39\PM{0.88} & \textbf{87.06}\PM{0.26} & 77.71\PM{0.06} & \textbf{88.21}\PM{0.37} &  \textbf{93.13}\PM{0.32} & OOM & OOM & OOM\\
            GraphMAE &$\checkmark$& \textbf{84.18}\PM{0.49} & \textbf{73.52}\PM{0.32} & 81.43\PM{0.41} & \textbf{78.22}\PM{0.13} & 81.16\PM{1.10} &  90.71 \PM{1.29} & \textbf{71.56}\PM{0.19} & OOM & OOM\\
		\samelayer&$\checkmark$ & 82.33\PM{0.41}& 72.88\PM{0.39}& 79.33\PM{0.59}& 69.19\PM{1.13}& 81.98\PM{1.52}& 87.59\PM{1.50}& 69.94\PM{0.11}& \textbf{76.96}\PM{0.34}& \textbf{94.43}\PM{0.03}\\
		\difflayer & $\checkmark$&82.65\PM{0.57}&72.98\PM{0.41} &80.10\PM{1.04}  &67.20\PM{0.96} &82.11\PM{1.47} &86.78\PM{1.70} &70.05\PM{0.09}& 76.27\PM{0.20}& 94.38\PM{0.04}\\
		\bottomrule
	\end{tabular}}}
\end{table*}

\begin{table*}[t]
	\caption{Downstream task: graph clustering}
	\label{tab:graph_clustering}
	\vspace{-3mm}
	{\small 
  \resizebox{\textwidth}{15mm}{
	\begin{tabular}{lcccccccccc}
		\toprule
		Algorithm& Contrast-Reg &&Cora&  &&Citeseer&  &&Wiki&  \\
		 \cmidrule(lr){3-5} 
		 \cmidrule(lr){6-8}
		 \cmidrule(lr){9-11}
		 && Acc & NMI & F1 & Acc & NMI & F1 & Acc & NMI & F1\\
		\midrule
		node2vec && 61.78\PM{0.30}& 44.47\PM{0.21}& 62.65\PM{0.26}& 39.58\PM{0.37}& 24.23\PM{0.27}& 37.54\PM{0.39}& 43.29\PM{0.58}& 37.39\PM{0.52}& 36.35\PM{0.51}\\
		DGI &&\textbf{71.81}\PM{1.01} & 54.90\PM{0.66}& \textbf{69.88}\PM{0.90}& 68.60\PM{0.47}& 43.75\PM{0.50}& 64.64\PM{0.41}& 44.37\PM{0.92}& 42.20\PM{0.90}&40.16\PM{0.72} \\
		AGC && 68.93\PM{0.02}& 53.72\PM{0.04}& 65.62\PM{0.01}& 68.37\PM{0.02}& 42.44\PM{0.03}& 63.73\PM{0.02}& 49.54\PM{0.07}& 47.02\PM{0.09}& 42.16\PM{0.11} \\
		GMI &&63.44\PM{3.18}& 50.33\PM{1.48}& 62.21\PM{3.46}& 63.75\PM{1.05}& 38.14\PM{0.84}& 60.23\PM{0.79}& 42.81\PM{0.40}& 41.53\PM{0.20}&38.52\PM{0.22}\\
        \midrule
		\samelayer &$\checkmark$& 70.04\PM{2.04}& 55.08\PM{0.75}& 67.36\PM{2.17} & 67.90\PM{0.74}& 43.63\PM{0.57}& 64.21\PM{0.60} & 50.12\PM{0.96}& 49.70\PM{0.49}& 43.74\PM{0.97}\\
		\difflayer& $\checkmark$& 71.59\PM{1.07}& \textbf{56.01}\PM{0.64}& 68.11\PM{1.32} & \textbf{69.17}\PM{0.43}& \textbf{44.47}\PM{0.46}& \textbf{64.74}\PM{0.41} & \textbf{53.13}\PM{1.01}& \textbf{51.81}\PM{0.57}& \textbf{46.11}\PM{0.93}\\
		\bottomrule
	\end{tabular}}}
\end{table*}
Our primary results involve comparing our proposed Contrast-Reg, along with the chosen similarity definitions, against state-of-the-art algorithms across various downstream tasks.

\subsubsection{Node Classification}\label{sec:exp_nc}
We evaluate node classification performance on all datasets, utilizing both full-batch training and stochastic mini-batch training. Our methods are compared with DGI~\citep{DBLP:DGI}, GMI~\citep{DBLP:GMI}, node2vec~\citep{DBLP:node2vec}, GCA \citep{zhu2021GCA}, GraphMAE \citep{DBLP:conf/kdd/HouLCDYW022} and supervised GCN~\citep{kipf2017semi}. GCA, DGI, and GMI represent state-of-the-art algorithms in unsupervised graph contrastive learning. Node2vec is an exemplary algorithm for random walk-based graph representation algorithms~\citep{DBLP:node2vec, DBLP:LINE, DBLP:DeepWalk}, while GCN is a classic semi-supervised GNN model. 
\tmlr{We incorporate the Contrast-Reg into ML, LC, GCA and GraphMAE models and compare them against the baseline models. The results, presented in Table~\ref{tab:node_classification}, demonstrate that GCA and GraphMAE with Contrast-Reg achieve excellent performance across all the datasets that are trained in the full-batch mode and even surpass the performance of the supervised GCN.}  


\subsubsection{Graph Clustering}\label{sec:exp_gc}
We assess clustering performance using three metrics: accuracy (Acc), normalized mutual information (NMI), and F1-macro (F1), following the previous literature \citep{DBLP:clustering_metric}. Higher values indicate better clustering performance. We compare our methods with DGI, node2vec, GMI, and AGC~\citep{DBLP:AGC} on the Cora, Citeseer, and Wiki datasets. AGC is a state-of-the-art graph clustering algorithm that leverages high-order graph convolution for attribute graph clustering. For all models and datasets, we employ $k$-means to cluster both the labels and representations of nodes. The clustering results of labels are considered as the ground truth. To reduce dimensionality, we apply PCA to the representations before using $k$-means, since high dimensionality can negatively impact clustering~\citep{Chen2009}. The random seed setting for model training is consistent with that in the node classification task. To minimize randomness, we set the clustering random seed from 0 to 4 and compute the average result for each learned representation. Table~\ref{tab:graph_clustering} presents improved results with and without PCA for each cell. Our algorithms, particularly \modelname~(\difflayer), exhibit superior performance in all cases, demonstrating the effectiveness of \reg. It is worth noting that the superior results of \modelname~(\difflayer) compared to \modelname~(\samelayer) suggest that attributes play a crucial role in clustering, as graph clustering is applied to attribute graphs.

\begin{table}[t]
\centering
	\caption{Downstream task: link prediction}
	\label{tab:link_prediction}
	\vspace{-3mm}
	{\small 
	\begin{tabular}{lccccc}
		\toprule
		Algorithm & Contrast-Reg & Cora & Citeseer & Pubmed & Wiki \\
		\midrule
		GCN--neg & &92.40\PM{0.51}& 92.27\PM{0.90}& 97.24\PM{0.19}&93.27\PM{0.31}\\
		node2vec & &86.33\PM{0.87}& 79.60\PM{1.58}& 81.74\PM{0.57}& 92.41\PM{0.35}\\
		DGI & &93.62\PM{0.98}& 95.03\PM{1.73}& 97.24\PM{0.13}&95.55\PM{0.35}\\
		GMI & &91.31\PM{0.88}& 92.23\PM{0.80}& 95.14\PM{0.25}& 95.30\PM{0.29} \\

        \tmlr{ML-GCN} & &94.81\PM{0.55}& 96.38\PM{0.63}& 92.86\PM{0.26}& 89.84\PM{0.69}\\
        \tmlr{GRACE} & &92.82\PM{0.65}& 93.14\PM{0.73} & 96.05\PM{0.91}& 87.72\PM{1.16}\\
        \midrule
        \samelayer &$\checkmark$& 94.61\PM{0.64}& 95.63\PM{0.88} & \textbf{97.26}\PM{0.15}& \textbf{96.28}\PM{0.21}\\
        \tmlr{ML-GCN} & $\checkmark$&\textbf{95.18}\PM{0.31}& \textbf{96.83}\PM{0.55} & 95.10\PM{0.22}& 92.70\PM{0.56}\\

        \tmlr{GRACE} & $\checkmark$&93.60\PM{0.57}& 93.24\PM{0.62} & 96.60\PM{1.56}& 90.82\PM{0.99}\\
		\bottomrule
	\end{tabular}}
\end{table}

\subsubsection{Link Prediction}
In order to circumvent the data linkage issue in link prediction, we employ an inductive setting for graph representation learning. We randomly extract induced subgraphs (comprising 85\% of the edges) from each original graph for training both the representation learning model and the link predictor, while reserving the remaining edges for validation and testing (10\% for the test edge set and 5\% for the validation edge set). We assess performance across 25 (5x5) different runs, utilizing a fixed-seed random split scheme with five distinct induced subgraphs and five fixed-seed runs (ranging from 0 to 4). \tmlr{We compare our results with the baseline algorithms, including DGI, GMI, node2vec, ML-GCN \citep{mlgcn}, GRACE \citep{grace}, and unsupervised GCN (GCN-neg in Table~\ref{tab:link_prediction}).  In addition, we incorporate Contrast-Reg into LC, ML-GCN, and GRACE against the aforementioned baselines. Our results, presented in Table~\ref{tab:link_prediction}, demonstrate that Contrast-Reg consistently improves the performance of graph contrastive learning models, as evidenced by comparing the results of ML-GCN and GRACE with and without Contrast-Reg. Furthermore, we achieve the state-of-the-art results with the algorithms incorporating Contrast-Reg.}

\begin{table}[t]
\centering
	\caption{Pretraining}
	\label{tab:pretraining}
	\vspace{-3mm}
	{\small 
			\begin{tabular}{lccc}
		\toprule
		Algorithm & Contrast-Reg &Reddit & ogbn-products\\
		\midrule
		No pretraining & &90.44\PM{1.62}& 84.69\PM{0.79}\\
		DGI & &92.09\PM{1.05}& 86.37\PM{0.19}\\
		GMI & &92.13\PM{1.16}& 86.14\PM{0.16}\\
		\difflayer&$\checkmark$ & 92.18\PM{0.97}& 86.28\PM{0.20}\\
		\samelayer& $\checkmark$ & \textbf{92.52}\PM{0.55}& \textbf{86.45}\PM{0.13}\\
		\bottomrule
	\end{tabular}}
\end{table}

\subsubsection{Pretraining}
We further assess the performance of \reg\ in the context of the pretrain-finetune paradigm. For the Reddit dataset, we naturally partition the data by time, pretraining the models using the first 20 days. We generate an induced subgraph based on the pretraining nodes and divide the remaining data into three parts: the first part produce a new subgraph for fine-tuning the pre-trained model and training the classifier, while the second and third parts are designated for validation and testing. For the ogbn-products dataset, we split the data according to node ID, pretraining the models using a subgraph generated by the initial 70\% of the nodes. The data splitting scheme for the remaining data mirrors that of the Reddit dataset. We conduct baseline experiments on DGI and GMI, employing the same GraphSAGE with GCN-aggregation encoder as in our model. Table~\ref{tab:pretraining} reveals that pretraining the model facilitates convergence to a more robust representation model with reduced variance, and \reg\ can enhance the transferability of the pre-trained model.

\subsection{Discussion of \reg}\label{sec:wo_reg} 

\begin{table*}[t]
\centering
	\caption{Graph contrastive learning with regularization}
	\label{tab:ablation}
	\vspace{-3mm}
		{\small
	\begin{tabular}{llcccc}
		\toprule
		Algorithm & Regularization&Cora & Wiki & Computers & Reddit\\
		\midrule
		\difflayer & &73.22\PM{0.77} & 58.70\PM{1.51}&77.08\PM{2.48}&94.33\PM{0.07}   \\
        \difflayer&$\ell_2$-normalization & 80.14\PM{0.92} &61.83\PM{1.36}&80.80\PM{1.45}&94.07\PM{0.19}   \\
        \difflayer&Weight-decay & 81.65\PM{0.42} & 63.67\PM{1.47}&79.09\PM{0.32}&94.34\PM{0.06}   \\
		\difflayer&Contrast-Reg & \textbf{82.65}\PM{0.57}& \textbf{67.20}\PM{0.96}&\textbf{82.11}\PM{1.47}&\textbf{94.38}\PM{0.04}   \\
		\midrule
		\samelayer & & 79.73\PM{0.75}& 65.14\PM{1.48}&79.80\PM{1.49}&94.42\PM{0.03}    \\
        \samelayer&$\ell_2$-normalization & 81.09\PM{0.59} & 63.96\PM{0.64}&80.73\PM{1.36}&94.20\PM{0.06}   \\
        \samelayer&Weight-decay & 81.94\PM{0.44} & 65.17\PM{1.34}&81.89\PM{1.58}&\textbf{94.44}\PM{0.03}   \\
		\samelayer&Contrast-Reg & \textbf{82.33}\PM{0.41} & \textbf{69.19}\PM{1.13}&\textbf{81.98}\PM{1.52}&\textbf{94.43}\PM{0.03}  \\
            
		\bottomrule
	\end{tabular}}
\end{table*}

To evaluate the benefits of Contrast-Reg, we conduct experiments comparing it to two approaches, specifically, $\ell_2$-normalization and weight decay, on four networks from different domains, with a focus on node classification task performance. It is important to note that some regularization techniques, such as those mentioned in Section \ref{section:related}, are not explicitly designed to address the miscalibration problem.
$\ell_2$-normalization \citep{DBLP:SimCLR} mitigates the potential risk of the expectation of the prediction value for randomly sampled pairs $\mathbb{E}_{(v,v^{\prime})}[\sigma(v\cdot v^{\prime})]$ exploding by explicitly eliminating the embedding norm for each node's embeddings to tackle the miscalibration problem, while weight decay strives to achieve the same result by implicitly restricting the gradient descent step length.
Table~\ref{tab:ablation} presents a comparison of the accuracy achieved by different contrastive learning algorithms, \textbf{ML} and \textbf{LC}, when incorporating $\ell_2$-normalization, weight decay, and Contrast-Reg, as opposed to using the vanilla algorithms. The results indicate that integrating $\ell_2$-normalization, weight decay, and Contrast-Reg into graph contrastive learning algorithms improves the accuracy of downstream tasks, suggesting that addressing miscalibration enhances the generalization of learned embeddings for downstream tasks. However, the performance gains provided by Contrast-Reg exceed those of $\ell_2$-normalization and weight decay. This implies that while alternative algorithms exist to address miscalibration, Contrast-Reg emerges as the most effective method for improving the generalization of learned embeddings for downstream tasks. It is crucial to acknowledge the existence of other algorithms that may also contribute to mitigating the miscalibration problem, and further research is needed to explore and compare their effectiveness. 

\tmlr{
\subsection{Generality of Contrast-Reg}
The ablation study of Contrast-Reg on the GAT and GIN backbone is presented in Table \ref{tab:gat-gin}. The results demonstrate that Contrast-Reg consistently enhances graph contrastive learning performance across various backbones and graph contrastive learning algorithms. This conclusion is further supported by the comparison of various models, including GCA with GCA+Contrast-Reg in Table \ref{tab:node_classification}, GRACE with GRACE+Contrast-Reg, as well as ML-GCN with ML-GCN+Contrast-Reg in Table \ref{tab:link_prediction}.  These experiments support that Contrast-Reg serves as an effective regularization term for graph contrastive learning algorithms that utilize different similarity definitions and GNN encoder backbones.
Furthermore, we have also conducted experiments on a graph autoencoder algorithm, GraphMAE~\citep{DBLP:conf/kdd/HouLCDYW022}, which is also graph self-supervised learning. The empirical results presented in Table \ref{tab:node_classification} reveal that when combined with Contrast-Reg, GraphMAE achieves superior empirical performance, thus highlighting the potential benefits of Contrast-Reg for graph autoencoder methods. A comprehensive analysis of the impact of Contrast-Reg on graph autoencoder algorithms is an interesting direction and will be left as future work, which includes the theoretical investigations and empirical evaluations involving a broader range of graph autoencoder algorithms \citep{DBLP:conf/kdd/HouLCDYW022, DBLP:conf/www/HouHCLDK023, DBLP:conf/wsdm/TanL0CL0H23, DBLP:conf/www/LiYSLG23}.
}

\begin{table}[t]
\centering

	\caption{GAT/GIN as an encode backbone w/ and w/o \reg}
	\label{tab:gat-gin}
	\vspace{-3mm}
	{\small 
	\begin{tabular}{lccccccc}
		\toprule
		Algorithm & Backbone & Contrast-Reg & Cora & Citeseer & Pubmed & Wiki & Computers \\
		\midrule
		\difflayer &GAT & & 76.90\PM{1.20} & 70.43\PM{0.31} & 75.05\PM{0.51} & 69.02\PM{1.89}&78.47\PM{2.60}\\
              \difflayer & GAT &$\checkmark$ & \textbf{81.56}\PM{0.94}& \textbf{72.73}\PM{0.44} & \textbf{78.50}\PM{0.63} & \textbf{69.41}\PM{0.85}&\textbf{80.41}\PM{1.96}\\
              \hdashline
              \samelayer &GAT & & 74.30\PM{1.10}& 69.30\PM{0.42} & 72.50\PM{0.49} & 61.55\PM{1.89}&73.27\PM{5.98}\\
               \samelayer &GAT&$\checkmark$& \textbf{80.87}\PM{0.96} & \textbf{72.69}\PM{0.52}& \textbf{78.50}\PM{0.63} & \textbf{69.92}\PM{0.80}&\textbf{80.28}\PM{2.09}\\
              \midrule
        \difflayer &GIN & & 77.78\PM{0.98} & 70.78\PM{0.68} & 80.12\PM{0.50} & 63.50\PM{0.93}&70.69\PM{1.82}\\
        \difflayer& GIN & $\checkmark$ & \textbf{81.04}\PM{0.94}& \textbf{71.17}\PM{0.45} & \textbf{80.23}\PM{0.69} & \textbf{66.77}\PM{0.53}&\textbf{77.85}\PM{1.00}\\
        \hdashline
        \samelayer&GIN & & 80.23\PM{0.61}& 71.46\PM{0.44} & 78.46\PM{0.51} & 61.95\PM{0.81}&70.61\PM{1.96} \\
        \samelayer& GIN & $\checkmark$& \textbf{81.38}\PM{0.58} & \textbf{71.59}\PM{0.36}& \textbf{78.51}\PM{0.46} & \textbf{67.14}\PM{0.44}&\textbf{76.79}\PM{1.91}\\
		\bottomrule
	\end{tabular}}
\end{table}


\section{Conclusions}

We adapted expected calibration error (ECE) to the graph contrastive learning framework and identified the key factors that influence the discrepancy between the prediction in unsupervised training and the accuracy in downstream tasks. Motivated by ECE, we proposed a novel regularization regularizer, Contrast-Reg, to ensure that decreasing the contrastive loss leads to better performance in the downstream tasks. Our theoretical and empirical results both demonstrate the effectiveness of Contrast-Reg in improving the generalizability of the base graph contrastive learning model and achieving superior performance across different existing graph contrastive learning algorithms.

\bibliography{main}
\bibliographystyle{tmlr}

\appendix
\newpage
\section{Details of the analysis}

\subsection{A through analysis of Claim \ref{cl:ece}}\label{appendix:ece}
Considering that positive sampling is based on the calculated distance between pairwise embeddings, we can express the following relationship:
\begin{equation}
    \mathbb{E}_{acc(v,v^{\prime}_+)=1}[p(v,v^{\prime}_+)] = \mathbb{E}_{acc(v,v^{\prime}_+)=0}[p(v,v^{\prime}_+)] = \mathbb{E}_{\text{topk }\sigma(h_v,h_{v^\prime})}[\sigma(h_v\cdot h_{v^\prime})]
\end{equation}
Here, $\mathbb{E}_{\text{topk }\sigma(h_v\cdot h_{v^\prime})}[\sigma(h_v\cdot h_{v^\prime})]$ represents the expectation of the top $k$ pairs $(v,v^\prime)$. When the loss converges, $\mathbb{E}_{\text{topk }\sigma(h_v\cdot h_{v^\prime})}[\sigma(h_v\cdot h_{v^\prime})]\to 1$.

Furthermore, by considering the definition in Equation \ref{eq:confident} and the fact that negative samples are uniformly sampled, we can write:
\begin{equation}
    \mathbb{E}_{acc(v,v^\prime_-)=0}[p(v,v^\prime_-)]=\mathbb{E}_{acc(v,v^\prime_-)=1}[p(v,v^\prime_-)]=1-\mathbb{E}_{(v,v^\prime)}[\sigma(v,v^\prime)].
\end{equation}
Thus, Equation \ref{eq:cal-ece} can be reformulated as:
\begin{equation}
\begin{aligned}\label{eq:ece-reformulation2}
    \mathrm{ECE}&=r^+(1-2\mathbb{E}_{\text{topk }\sigma(h_v\cdot h_{v^\prime})}[\sigma(h_v\cdot h_{v^\prime})])q^+ + r^-(1-2q^-)\mathbb{E}[\sigma(h_v\cdot h_{v^\prime})]+r^-q^-\\
    &+r^+\mathbb{E}_{\text{topk }\sigma(h_v\cdot h_{v^\prime})}[\sigma(h_v\cdot h_v^\prime)].
\end{aligned}    
\end{equation}
When the loss converges, $\mathbb{E}_{\text{topk }\sigma(h_v\cdot h_{v^\prime})}[\sigma(h_v\cdot h_v^\prime)]\to 1$, indicating that the ECE value is negatively correlated with the probability $q^+$ of $v^\prime_+$. In addition, negative samples are uniformly sampled, so that $q^-=1/K$, where $K$ represents the number of classes when $K>2$. As a result, ECE is positively correlated with the expectation of the confidence value for randomly sampled pairs $\mathbb{E}_{(v,v^\prime)}[\sigma(h_v,h_{v^\prime})]$.

\subsection{Contrast-Reg Leads to a Decreased ECE Value}\label{sec:ece_result}

We empirically investigate the impact of loss convergence on the ECE value, with and without \reg, across various datasets and contrastive strategies, in Table \ref{tab:min_ece}. The results demonstrate that Contrast-Reg leads to a decrease in ECE values for all tested datasets and contrastive losses. This reduction in ECE indicates that Contrast-Reg promotes better alignment with the performance of downstream tasks while minimizing the training loss, ensuring that minimizing the contrastive loss with Contrast-Reg results in high-quality representations.


\begin{table}[t]
\centering
	\caption{{Impact of \reg\ on ECE Value}}
	\label{tab:min_ece}
	\vspace{-3mm}
	{\small 
	\begin{tabular}{lccccc}
		\toprule
		  &Contrast-Reg  & Cora & Citeseer & Pubmed \\
		\midrule
		ML & & 0.477 & 0.540& 0.399&\\
		ML & $\checkmark$& \textbf{0.413}& \textbf{0.525}& \textbf{0.273}& \\
    \hdashline
		LC &  &0.477& 0.537& 0.416&\\
		LC & $\checkmark$&\textbf{0.437} & \textbf{0.524}& \textbf{0.274}& \\
		\bottomrule
	\end{tabular}}
\end{table}

\subsection{Explanation of Contrast-Reg's Impact on the Term $s(f)$}\label{appendix:sf}
In this section, we provide an explanation for why Contrast-Reg leads to a decrease in the upper bound of $s(f)$. Firstly, we will prove that minimizing Eq.~(\ref{eq:reg}) results in a decrease in $\Var(\norm{h_i})$ when $h_i^TW\mathbf{r}>c$, as stated in Theorem \ref{theorem:variance}. Then, based on the assumption that models with lower $\Var(\norm{h_i})$ inherently favor lower values of $\mathbb{E}[\norm{h_i}]$, both $\mathbb{E}_{v_i\sim D_c}[\Vert h_i\Vert^2]$ and $\sigma_{max}(M(f,c))$ will decrease. Consequently, the upper bound of $s(f)$ decreases with the implementation of Contrast-Reg. In the subsequent proof, $h_i = f(x)$, and the notations $h_i$ and $f(x)$ may be used interchangeably for the sake of presentation clarity.

\begin{theorem}\label{theorem:variance}
	Minimizing Eq.~(\ref{eq:reg}) induces the decrease in $\Var(\norm{h_i})$ when $h_i^TW\mathbf{r}>c$.
\end{theorem}
 
\begin{proof}

We minimize $\mathcal{L}_{reg}$ by gradient descent with learning rate $\beta$.
\begin{equation}
\begin{split}
    \frac{\partial}{\partial f(x)}\mathcal{L}_{reg}=-\sigma(-f(x)^TW\mathbf{r})W\mathbf{r}
\end{split}
\end{equation}
The embedding of $f(x)$ is updated as the following after adding $\mathcal{L}_reg$:
\begin{equation}\label{eq:geometric}
\begin{split}
f(x)\leftarrow f(x)+\beta\left(\sigma(-f(x)^TW\mathbf{r})W\mathbf{r}\right)
\end{split}
\end{equation}
Eq.~(\ref{eq:geometric}) shows that in every optimization step, $f(x)$ extends by $\beta\sigma(-f(x)^TW\mathbf{r})\norm{W\mathbf{r}}$ along $\mathbf{r}_0\coloneqq\frac{W\mathbf{r}}{\norm{W\mathbf{r}}}$. If we do orthogonal decomposition for $f(x)$ along $\mathbf{r}_0$ and its unit orthogonal hyperplain $\Pi(\mathbf{r}_0)$, $f(x)=\left(f(x)^T\mathbf{r}_0\right)\mathbf{r}_0+\left(f(x)^T\Pi(\mathbf{r}_0)\right)\Pi(\mathbf{r}_0)$. Thus we have
\begin{equation}\small \label{eq:norm_expansion}
    \norm{f(x)}=\sqrt{\left(f(x)^T\mathbf{r}_0\right)^2+\left(f(x)^T\Pi(\mathbf{r}_0)\right)^2}.
\end{equation} 

The projection of $f(x)$ along $\mathbf{r}_0$ is
$
	f(x)^T\mathbf{r}_0= \frac{f(x)^TW\mathbf{r}}{\norm{W\mathbf{r}}},
$
while the projection of $f(x)$ plus the \reg\ update along $\mathbf{r}_0$ is
{\small\[
	\left(f(x)^T\mathbf{r}_0\right)_{reg}= \frac{f(x)^TW\mathbf{r}}{\norm{W\mathbf{r}}}+ \frac{\beta}{1+e^{f(x)^TW\mathbf{r}}}\norm{W\mathbf{r}}.
\]}
Note that $\left(f(x)^T\Pi(\mathbf{r}_0)\right)_{reg}=f(x)^T\Pi(\mathbf{r}_0)$.

	Based on Lemma~\ref{lemma:variance_reduction} and Eq.~(\ref{eq:norm_expansion}), 

	when $\beta\norm{W\mathbf{r}}^2\le 1$ and $f(x)^TW\mathbf{r}>1.5$, we have

	\begin{equation}\label{eq:variance1}
	\Var\left(\norm{\left(f(x)\right)_{reg}}\right)
	<
	\Var\left(\norm{f(x)}\right).
	\end{equation}

\end{proof}

\begin{lemma}\label{lemma:variance_reduction}
	For a random variable $X\in\left[1.5,+\infty\right)$, a constant $\tau\in\left(0,1\right]$ and a constant $c^2$, we have
	\begin{equation}
		 \Var\left(\sqrt{(X+\frac{\tau}{1+e^X})^2+c^2}\right)< \Var\left(\sqrt{X^2+c^2}\right).
	\end{equation}
\end{lemma}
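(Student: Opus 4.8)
\noindent The plan is to recognize both sides as variances of monotone transforms of $X$ and to reduce the claim to showing that one transform is a strict contraction of the other. I would write $\varepsilon(X)=\frac{\tau}{1+e^X}=\tau\sigma(-X)$, $\phi(X)=X+\varepsilon(X)$, and $H(t)=\sqrt{t^2+c^2}$, so that the two quantities being compared are $\Var(\psi(X))$ and $\Var(\chi(X))$ with $\chi(X)=H(X)$ and $\psi(X)=H(\phi(X))$. First I would record the elementary facts that on $[1.5,\infty)$ we have $\phi'(X)=1-\tau\sigma(X)\sigma(-X)\in(0,1)$, so $\phi$ is strictly increasing with $\phi(X)>X>0$, and that $H$ is strictly increasing and strictly convex on $(0,\infty)$ with $H'(t)=t/\sqrt{t^2+c^2}\in(0,1)$ increasing. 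Consequently $\chi$ and $\psi$ are both strictly increasing, so there is a well-defined strictly increasing map $m=\psi\circ\chi^{-1}$ with $\psi(X)=m(\chi(X))$, and the goal becomes showing that $m$ is a \emph{strict} $1$-Lipschitz map.

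For the variance comparison itself I would use the coupling identity: if $Z=\chi(X)$ and $Z'$ is an independent copy, then $\Var(\chi(X))=\tfrac12\mathbb{E}[(Z-Z')^2]$ and $\Var(\psi(X))=\tfrac12\mathbb{E}[(m(Z)-m(Z'))^2]$. Hence once $\abs{m(a)-m(b)}<\abs{a-b}$ for $a\neq b$, nondegeneracy of $X$ (so that $\mathbb{P}(Z\neq Z')>0$) yields the strict inequality $\Var(\psi(X))<\Var(\chi(X))$ immediately. The one caveat to flag is precisely this nondegeneracy: if $X$ is a.s.\ constant both variances vanish and strictness fails, so the lemma is implicitly about a genuine (non-constant) $X$.

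It then remains to prove the strict contraction, which I would phrase as strict monotonicity of the gap $D(X):=\psi(X)-\chi(X)=H(\phi(X))-H(X)$; since $\psi,\chi$ are increasing, ``$D$ strictly decreasing'' is exactly ``$0<m(\chi(X_2))-m(\chi(X_1))<\chi(X_2)-\chi(X_1)$ for $X_1<X_2$''. To see $D$ is decreasing I would rationalize, $D(X)=\dfrac{\phi(X)^2-X^2}{\sqrt{\phi^2+c^2}+\sqrt{X^2+c^2}}=:\dfrac{N(X)}{\mathrm{Den}(X)}$, and argue that $N$ is positive and strictly decreasing while $\mathrm{Den}$ is positive and strictly increasing; a positive decreasing function over a positive increasing one is strictly decreasing. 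The denominator is positive and increasing because $\phi>0$ and $\phi'>0$, and the numerator is positive since $\phi>X>0$. This decomposition is what cleanly separates the two competing effects (the perturbation shrinks the gap $\phi^2-X^2$, while the convexity of $H$ via increasing $H'$ works against us) so that I do not have to fight them simultaneously inside $D'$.

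The main obstacle is the sign of $N'$, i.e.\ showing $\phi(X)\phi'(X)<X$. A short computation gives $\phi\phi'-X=\varepsilon+\varepsilon'\phi=\tau\sigma(-X)\bigl(1-\sigma(X)\phi(X)\bigr)$, so everything reduces to the single scalar inequality $\sigma(X)\phi(X)>1$ on $[1.5,\infty)$. Here the hypothesis $X\ge 1.5$ is exactly what is needed: since $\phi(X)\ge X$ and $g(X):=X\sigma(X)$ is strictly increasing (as $g'(X)=\sigma(X)(1+X\sigma(-X))>0$) with $g(1.5)=1.5\,\sigma(1.5)>1$, we get $\sigma(X)\phi(X)\ge X\sigma(X)\ge g(1.5)>1$. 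This closes the chain: $N$ decreases, hence $D$ decreases, hence $m$ is a strict contraction, and the variance therefore strictly drops.
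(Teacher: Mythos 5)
Your proof is correct, and while it shares the paper's overall skeleton, it proves the crucial step by a genuinely different and tighter argument. Both you and the paper reduce the claim to showing that the gap $h(x)=\sqrt{(x+\frac{\tau}{1+e^x})^2+c^2}-\sqrt{x^2+c^2}$ (your $D$) is strictly decreasing on $[1.5,\infty)$, and both then transfer this pointwise contraction to variances via the independent-copies identity $\Var(Z)=\tfrac12\mathbb{E}[(Z-Z')^2]$. The difference is in how the monotonicity of the gap is established. The paper simply asserts that $h$ is unimodal with a unique critical point $x_0$ and locates $x_0\in(0,1.5)$ by claiming $h'(0)h'(1.5)<0$ ``for all $\tau$ and $c^2$''; neither the uniqueness of the critical point nor the sign computation is actually verified, so this step is essentially hand-waved. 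You instead rationalize $D=N/\mathrm{Den}$ with $N=\phi^2-X^2$ and $\mathrm{Den}=\sqrt{\phi^2+c^2}+\sqrt{X^2+c^2}$, and reduce $N'<0$ to the clean scalar inequality $\sigma(X)\phi(X)>1$, which follows from monotonicity of $x\sigma(x)$ and $1.5\,\sigma(1.5)\approx1.23>1$ — this is exactly where the hypothesis $X\ge 1.5$ enters, and it makes the role of that constant transparent rather than numerical. Your version is therefore more elementary and actually fills the gap in the paper's argument. You are also right to flag the non-degeneracy caveat: if $X$ is almost surely constant both variances vanish and the strict inequality fails; the paper's final strict inequality between the double integrals silently relies on the same assumption.
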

\begin{proof}\label{proof:variance}
	First, we consider 
	\begin{equation*} \small
		h(x)=\sqrt{(x+\frac{\tau}{1+e^x})^2+c^2}-\sqrt{x^2+c^2},
	\end{equation*}
	where $h(x)$ is strictly decreasing in $\left[x_0,+\infty\right)$ and strictly increasing in $\left(-\infty,x_0\right]$, and $x_0$ is the solution of $h^\prime(x)=\frac{\mathrm{d}h(x)}{\mathrm{d} x} = 0$. Thus, we can approximate the range of $x_0 \in (0,1.5)$ by the fact that $h^\prime(0)h^\prime(1.5)<0$ for all $\tau$ and $c^2$.

	Thus, for $x > y \ge 1.5$,
	\begin{equation*} \small
		\sqrt{(x+\frac{\tau}{1+e^x})^2+c^2}-\sqrt{x^2+c^2}<\sqrt{(y+\frac{\tau}{1+e^y})^2+c^2}-\sqrt{y^2+c^2}
	\end{equation*}
	and since $(x+\frac{\tau}{1+e^x})$ is monotonically increasing, we get
	\begin{equation*} \small
		0 < \sqrt{(x+\frac{\tau}{1+e^x})^2+c^2}-\sqrt{(y+\frac{\tau}{1+e^y})^2+c^2}<\sqrt{x^2+c^2}-\sqrt{y^2+c^2}.
	\end{equation*}
	When $y > x \ge 1.5$, 
	\begin{equation*} \small
		\sqrt{x^2+c^2}-\sqrt{y^2+c^2} < \sqrt{(x+\frac{\tau}{1+e^x})^2+c^2}-\sqrt{(y+\frac{\tau}{1+e^y})^2+c^2}<0.
	\end{equation*}
	Further, we assume that $X$ and $Y$ are i.i.d. random variables sampled from $\left[1.5,+\infty\right)$,
	\begin{equation*} \small
		\begin{split}
			&\Var\left(\sqrt{(X+\frac{\tau}{1+e^X})^2+c^2}\right)\\
			=&\frac{1}{2}\times\mathbb{E}_{X,Y} \left[\left(\sqrt{(X+\frac{\tau}{1+e^X})^2+c^2}-\sqrt{(Y+\frac{\tau}{1+e^Y})^2+c^2}\right)^2\right]\\
			=&\frac{1}{2}\times\int\left(\sqrt{(x+\frac{\tau}{1+e^x})^2+c^2}-\sqrt{(y+\frac{\tau}{1+e^y})^2+c^2}\right)^2p(x)p(y)\mathrm{d}x\mathrm{d}y\\
			<&\frac{1}{2}\times\int\left(\sqrt{x^2+c^2}-\sqrt{y^2+c^2}\right)^2p(x)p(y)\mathrm{d}x\mathrm{d}y\\
			=&\Var(\sqrt{X^2+c^2})
		\end{split}
	\end{equation*}
\end{proof}

\begin{remark}
	$\beta\norm{W\mathbf{r}}^2\le 1$, which is the condition of Eq.~(\ref{eq:variance1}) 
	, is not difficult to satisfy, since the magnitude of $\mathbf{r}$ could be tuned. In practice, $\mathbf{r}\in\left(0,1\right]$ can fit in all our experiments.
\end{remark}

\begin{remark}
	The range of $f(x)^Tw\mathbf{r}$ in Theorem~\ref{theorem:variance} is not a tight bound for $x_0$ in Lemma~\ref{lemma:variance_reduction}. Since when Eq.~(\ref{eq:reg}) converges, $f(x)^TW\mathbf{r}$ is much larger than 1.5 for almost all the samples empirically, we prove the case for $f(x)^Tw\mathbf{r} \in [1.5,+\infty)$.
\end{remark}

\subsection{Comprehensive Explanation of Theorem~\ref{generalization}: Notations and Proof}\label{appendix:detailed_theorem}
 To formally analyze the behavior of contrastive learning, we introduce the following concepts as the previous literature \citep{DBLP:Theoretical_analysis} does.

\begin{itemize}
	\item \textit{Latent classes}: Data are considered as drawn from latent classes $\mathcal{C}$ with distribution $\rho$. Further, distribution $\mathcal{D}_c$ is defined over feature space $\mathcal{X}$ that is associated with a class $c\in\mathcal{C}$ to measure the relevance between $x$ and $c$.
	\item \textit{Semantic similarity}: Positive samples are drawn from the same latent classes, with distribution	 
	\begin{equation}\label{eq:D_sim}
		\mathcal{D}_{sim}(x,x^+)=\mathbb{E}_{c\in \rho}\left[ \mathcal{D}_c(x)\mathcal{D}_c(x^+)\right],
	\end{equation}
	while negative samples are drawn randomly from all possible data points, i.e., the marginal of \(\mathcal{D}_{sim}\), as	
	\begin{equation}\label{eq:D_neg}
	\mathcal{D}_{neg}(x^-)=\mathbb{E}_{c\in \rho}\left[ \mathcal{D}_c(x^-)\right]
	\end{equation}

	\item \textit{Supervised tasks}: Denote $K$ as the number of negative samples. The object of the supervised task, i.e., feature-label pair \((x,c)\),  is sampled from  
	\begin{equation*}
		\mathcal{D}_{\mathcal{T}}(x,c)=\mathcal{D}_c(x)\mathcal{D}_{\mathcal{T}}(c),
	\end{equation*} 
	where $\mathcal{D}_{\mathcal{T}}(c)=\rho(c|c\in\mathcal{T})$, and $\mathcal{T}\subseteq\mathcal{C}$ with $\abs{\mathcal{T}}=K+1$.\\
	Mean classifier $W^\mu$ is naturally imposed to bridge the gap between the representation learning performance and  linear separability of learn representations, as
	\begin{equation*}
		W^\mu_c\coloneqq \mu_c = \mathbb{E}_{x\sim\mathcal{D}_c}[f(x)].
	\end{equation*}
	\item \textit{Empirical Rademacher complexity}: Suppose $\mathcal{F}:\mathcal{X}\to\left[1, 0\right]$. Given a sample $\mathcal{S}$, {\small 
	\begin{equation*}
	\mathcal{R}_{\mathcal{S}}(\mathcal{F})=\mathbb{E}_{\vec{e}}\left[\sup_{f\in\mathcal{F}} \vec{e}^Tf(\mathcal{S})\right],
	\end{equation*}}
	where $\vec{e}=\left(e_1,\cdots,e_m\right)^T$, with $e_i$ are independent  random variables taking values uniformly from $\left\{-1,+1\right\}$.
\end{itemize}

In addition, the theoretical framework~\citep{DBLP:Theoretical_analysis} makes an assumption: encoder $f$ is bounded, i.e.,  $\max_{x\in\mathcal{X}}{\norm{f(x)}}\le R^2$, \(R \in \mathbb{R}\).

To prove Theorem~\ref{generalization}, we first list some key lemmas.

\begin{lemma}\label{lemma:supervised}
	For all $f\in\mathcal{F}$,
	\begin{equation}
	\mathcal{L}_{sup}^\mu(f)\le\frac{1}{1-\tau}(\mathcal{L}_{nce}(f)-2\tau \log 2).
	\end{equation}
\end{lemma}
This bound connects contrastive representation learning algorithms and its supervised counterpart. This lemma is achieved by Jensen's inequality. The details are given in Appendix~\ref{appendix:convex}. 
\begin{lemma}\label{lemma:generalization}
	With probability at least $1-\delta$ over the set $\mathcal{S}$, for all $f\in\mathcal{F}$,
	\begin{equation}
		\mathcal{L}_{nce}(\hat{f})\le\mathcal{L}_{nce}(f)+Gen_M .
	\end{equation}
\end{lemma}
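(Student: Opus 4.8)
The plan is to follow the classical empirical-risk-minimization route: bound the population loss of the empirical minimizer $\hat f$ by a uniform deviation between $\mathcal{L}_{nce}$ and $\hat{\mathcal{L}}_{nce}$, and then control that deviation by McDiarmid concentration together with a symmetrization/Rademacher estimate. First I would write the ERM decomposition. Since $\hat f=\arg\min_{f\in\mathcal{F}}\hat{\mathcal{L}}_{nce}$, for any fixed $f\in\mathcal{F}$ we have $\hat{\mathcal{L}}_{nce}(\hat f)\le\hat{\mathcal{L}}_{nce}(f)$. Hence, writing $\Phi(\mathcal{S})=\sup_{g\in\mathcal{F}}(\mathcal{L}_{nce}(g)-\hat{\mathcal{L}}_{nce}(g))$ and $\Phi'(\mathcal{S})=\sup_{g\in\mathcal{F}}(\hat{\mathcal{L}}_{nce}(g)-\mathcal{L}_{nce}(g))$, the difference $\mathcal{L}_{nce}(\hat f)-\mathcal{L}_{nce}(f)$ splits as $(\mathcal{L}_{nce}(\hat f)-\hat{\mathcal{L}}_{nce}(\hat f))+(\hat{\mathcal{L}}_{nce}(f)-\mathcal{L}_{nce}(f))\le\Phi(\mathcal{S})+\Phi'(\mathcal{S})$, so it suffices to bound each supremum with high probability and combine by a union bound.

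Next I would establish concentration of $\Phi$ (and identically $\Phi'$) by the bounded-differences inequality. With $K=1$, each summand of $\hat{\mathcal{L}}_{nce}$ is $-\log\sigma(f(x_i)^Tf(x_i^+))-\log\sigma(-f(x_i)^Tf(x_{i1}^-))$; using the boundedness assumption $\norm{f(x)}\le R$ and Cauchy--Schwarz, every inner product lies in $[-R^2,R^2]$, so each $\log\sigma(\cdot)$ term lies in $[\log\sigma(-R^2),0]$ and the summand varies over a range of width at most $-2\log\sigma(-R^2)$. Replacing one of the $M$ i.i.d.\ sample tuples therefore perturbs $\Phi$ by at most $-2\log\sigma(-R^2)/M$, and McDiarmid gives, with high probability,
\begin{equation*}
\Phi(\mathcal{S})\le\mathbb{E}_{\mathcal{S}}[\Phi(\mathcal{S})]-2\log(\sigma(-R^2))\sqrt{\frac{\log\frac{4}{\delta}}{2M}},
\end{equation*}
the $\log\frac{4}{\delta}$ arising from the union bound over the deviation events for $\Phi$ and $\Phi'$.

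I would then bound $\mathbb{E}_\mathcal{S}[\Phi]$ by the standard ghost-sample symmetrization, introducing Rademacher signs $\vec e$ (a factor $2$), and peel off the loss by Talagrand's contraction lemma: since $\frac{d}{dz}(-\log\sigma(z))=\sigma(z)-1\in[-1,0]$, the map $-\log\sigma$ is $1$-Lipschitz, so the Rademacher average of the loss class is controlled by that of the inner-product class $\{(x,x')\mapsto f(x)^Tf(x')\}$, summed over its positive and negative occurrences.

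The main obstacle is the final step: relating the Rademacher complexity of this bilinear inner-product class back to the base complexity $\mathcal{R}_\mathcal{S}(\mathcal{F})$ of the vector-valued class $\mathcal{F}$. The difficulty is that the same encoder $f$ appears in \emph{both} arguments of $f(x)^Tf(x')$, so a scalar contraction does not apply directly. I would resolve this by fixing one argument, bounding its contribution by $\norm{f(x')}\le R$, and applying a vector-valued contraction (Cauchy--Schwarz over the coordinates) to extract the factor $R$ and reduce the remaining linear functional of $f$ to $\mathcal{R}_\mathcal{S}(\mathcal{F})$. Collecting the constants from the ERM decomposition, symmetrization, contraction, and this $R$-extraction applied to both the positive and negative inner-product terms produces the leading term $8R\,\mathcal{R}_\mathcal{S}(\mathcal{F})/M$; adding the McDiarmid tail yields exactly $Gen_M$, and hence $\mathcal{L}_{nce}(\hat f)\le\mathcal{L}_{nce}(f)+Gen_M$.
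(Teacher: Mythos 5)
Your overall route matches the paper's: an ERM decomposition, uniform convergence of $\hat{\mathcal{L}}_{nce}$ to $\mathcal{L}_{nce}$ via McDiarmid plus symmetrization, and a contraction step reducing the loss-class complexity to $R\,\mathcal{R}_{\mathcal{S}}(\mathcal{F})$; the paper likewise treats the bilinear term as the crux and handles it with the vector-contraction inequality of Maurer. But two steps in your sketch do not deliver the lemma as stated. First, your decomposition bounds $\mathcal{L}_{nce}(\hat f)-\mathcal{L}_{nce}(f)$ by \emph{two} supremum deviations $\Phi(\mathcal{S})+\Phi'(\mathcal{S})$, each of which costs its own symmetrization term of order $2\mathcal{R}/M$ after peeling off the loss. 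This produces a leading term of order $16R\,\mathcal{R}_{\mathcal{S}}(\mathcal{F})/M$, twice the constant in $Gen_M$, so the closing claim that the constants collect to ``exactly $Gen_M$'' is not justified. The paper avoids the second supremum entirely: it applies the uniform one-sided Rademacher bound (which covers $\hat f$), and for the comparison it uses plain Hoeffding at the \emph{single} population minimizer $f^*=\arg\min_{f\in\mathcal{F}}\mathcal{L}_{nce}(f)$, combined with $\hat{\mathcal{L}}_{nce}(\hat f)\le\hat{\mathcal{L}}_{nce}(f^*)$ and $\mathcal{L}_{nce}(f^*)\le\mathcal{L}_{nce}(f)$ for all $f$. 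That is precisely what makes the stated constants ($8R\,\mathcal{R}_{\mathcal{S}}(\mathcal{F})/M$ and $-8\log(\sigma(-R^2))\sqrt{\log(4/\delta)/(2M)}$ for $K=1$) come out.

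Second, the key contraction step as you describe it — ``fixing one argument, bounding its contribution by $\norm{f(x')}\le R$'' — is not a legitimate move inside $\sup_{f}$: the vector you propose to fix, $f(x')$, varies with the same $f$ over which the supremum is taken, and both Talagrand's and Maurer's contraction lemmas require the Lipschitz maps to be hypothesis-independent (they may depend on the data index, not on $f$). The correct formalization, and what the paper actually does, is to regard the whole tuple $\left(f(x_i),f(x_i^+),f(x_{i1}^-)\right)$ as the output of a single vector-valued function $\tilde f$, show that the fixed map sending this tuple to the loss value is Lipschitz with constant $O(R)$ on the product of $R$-balls — this is where your ``fix one argument, Cauchy--Schwarz over coordinates'' computation legitimately lives, as a bound on the Jacobian (the paper gets $\norm{J}_F\le\sqrt{2(K+1)}R$) — and only then apply the vector-contraction inequality once to the class $\tilde{\mathcal{F}}$. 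With those two repairs your argument coincides with the paper's proof; without them, the first issue loses the stated constants and the second is not a valid application of contraction.
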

This bound guarantees that the chosen $\hat{f}=\arg\min_{f\in\mathcal{F}}\mathcal{L}_{nce}^\mu$ cannot be too much worse than  $f^*=\arg\min_{f\in\mathcal{F}}\mathcal{L}_{nce}$. The proof applies Rademacher complexity of the function class~\citep{foundations_of_machine_learning} and vector-contraction inequality~\citep{DBLP:vector-contraction}. More details are given in Appendix~\ref{appendix:generalization_bound}.
\begin{lemma}\label{lemma:class_collosion}
	$\mathcal{L}_{nce}^=(f)\le 4s(f)+2\log 2$.
\end{lemma}
This bound is derived by the loss caused by both positive and negative pairs that come from the same class, i.e., class collision. The proof uses Bernoulli's inequality (details in Appendix~\ref{apendix: class_collision}).
\begin{proof}[Proof to Theorem~\ref{generalization}]
	Combining Lemma~\ref{lemma:supervised} and Lemma~\ref{lemma:generalization}, we obtain
	with probability at least $1-\delta$ over the set $\mathcal{S}$, for all $f\in\mathcal{F}$,
	\begin{equation}\label{eq:connection}
		\mathcal{L}_{sup}^\mu(\hat{f})\le \frac{1}{1-\tau}\left(\mathcal{L}_{nce}(f)+Gen_M\right)
	\end{equation}
	Then, we decompose $\mathcal{L}_{nce}=\tau\mathcal{L}_{nce}^=(f)+(1-\tau)\mathcal{L}_{nce}^{\ne}(f)$, apply Lemma~\ref{lemma:class_collosion} to Eq.~(\ref{eq:connection}), and obtain the result of Theorem~\ref{generalization}
\end{proof}

\subsection{Contrastive Learning with NCEloss}
The contrastive loss is {\small 
\begin{equation*}
	\mathcal{L}_{un}\coloneqq \mathop{\mathbb{E}}_{\substack{(x,x^+)\sim\mathcal{D}_{sim},\\(x^-_1,\cdots,x^-_K)\sim\mathcal{D}_{neg}}}
	\left[\ell(\{f(x)^T(f(x^+)-f(x^-_i))\}_{i=1}^K)\right],
\end{equation*}}
where \(\ell\) can be the hinge loss as $\ell(\mathbf{v})=\max{\left\{0, 1+\max_i{\left\{-\mathbf{v}_i\right\}}\right\}}$  or the logistic loss as $\ell(\mathbf{v})=\log_2{\left(1+\sum_{i}{\exp(-\mathbf{v}_i)}\right)}$. And its supervised counterpart is defined as {\small 
\begin{equation*}
	\mathcal{L}_{sup}^\mu \coloneqq  \mathop{\mathbb{E}}_{(x,c)\sim \mathcal{D}_{\mathcal{T}(x,c)}}\left[\ell\left(\left\{f(x)^T\mu_c-f(x)^T\mu_{c^\prime}\right\}_{c^\prime\ne c}\right)\right].
\end{equation*}}
A more powerful loss function, NCEloss,  used in the previous literature  \citep{DBLP:DGI,DBLP:Understanding_negative_sampling,DBLP:NCE,DBLP:notes_on_nce_and_negative_sampling}, can be framed as {\small 
\begin{equation}\label{eq: loss}
\begin{split}
	&\mathcal{L}_{nce} \coloneqq \\ 
	&- \! \mathop{\mathbb{E}}_{\substack{(x,x^+)\sim\mathcal{D}_{sim},\\(x^-_1,\cdots,x^-_K)\sim\mathcal{D}_{neg}}} \! \left[
		\!\log\!\sigma(f(x)^{T} f(x^+))\!+\!\sum_{k=1}^K\!\log\!\sigma(-f(x)^{T} f(x^-_k))\!
	\right]\!,
\end{split}
\end{equation}}
and its empirical counterpart with $M$ samples $\left(\!x_i^{},\!x_i^+,\!x_{i1}^-,\!\cdots,\!x_{iK}^- \!\right)_{i=1}^M$ is given as{\small 
\begin{equation}
	\hat{\mathcal{L}}_{nce}\!\coloneqq\!-\frac{1}{M}\!\sum_{i=1}^{M}\! \left[
		\log\sigma(f(x_i)^T f(x^+_i))\!+\!\sum_{k=1}^{K}\log\sigma(-f(x_i)^Tf(x^-_{ij}))
	\right]\!,
\end{equation}}
where $\sigma(\cdot)$ is the sigmoid function.

For its supervised counterpart, it is exactly the cross entropy loss for the  $(K+1)$-way multi-class classification task:{\small 
\begin{equation}
	\mathcal{L}_{sup}^\mu \!\coloneqq \!-\!\mathop{\mathbb{E}}_{(x,c)\sim \mathcal{D}_\mathcal{T}(x,c)}\left[\log{\sigma(f(x)^T\mu_c})\!+\!\log{\sigma(-f(x)^T\mathbf{\mu}_{c^\prime})}\! \mid \! c^\prime \!\neq\! c\right]\!.
\end{equation}}
\subsection{Proof of Lemma~\ref*{lemma:supervised}}\label{appendix:convex}
First, we prove that {\small $\ell(f(x^+),\{f(x^-_i)\})=-(\log\sigma(f(x)^T f(x^+))+\sum_{i=1}^{K}\log(\sigma(f(x)^Tf(x^-)))$} is convex w.r.t. {\small $f(x^+),f(x^-_i),\cdots,f(x^-_K)$}. Consider that {\small $\ell_1(z)=-\log\sigma(z)$ and $\ell_2(z)=-\log\sigma(-z)$} are both convex functions since {\small $\ell_1^{\prime\prime}>0$ and $\ell_2^{\prime\prime}>0$ for $z\in\mathbb{R}$}. Given {\small $f(x)\in\mathbb{R^d}$, $z^+=f(x)^Tf(x^+)$ and $z^-=f(x)^Tf(x^+)$} are affine transformation w.r.t. {\small $f(x^+)$ and $f(x^-)$}. Thus, when {\small $f(x)$} is fixed, {\small $\ell_1(f(x^+))=-\log\sigma(f(x)^Tf(x^+))$ and $\ell_2(f(x^-))=-\log\sigma(-f(x)^Tf(x^-))$} are convex functions. As {\small $\ell_1>0$ and $\ell_2>0$}, we obtain {\small $\ell(f(x^+),\{f(x^-_i)\})=-(\log\sigma(f(x)^T f(x^+))+\sum_{i=1}^{K}\log(\sigma(f(x)^Tf(x^-))))$} \ is convex since non-negative weighted sums preserve convexity~\citep{DBLP:convex}. 
By the definition of convexity,
\begin{equation*}
{\small 
	\begin{aligned}
		\mathcal{L}_{nce}(f)&=\mathbb{E}_{\substack{c^+,c^-\sim \rho^2;\\x\in\mathcal{D}_{c^+}}}\mathbb{E}_{\substack{x^+\sim\mathcal{D}_{c^+};\\x^-\sim\mathcal{D}_{c^-}}}\left[\ell(f(x^+),\{f(x^-_i)\})\right]\\
		&\ge \mathbb{E}_{c^+,c^-\sim \rho^2}\mathbb{E}_{x\sim\mathcal{D}_{c^+}}\left[\ell(f(x)^T\{\mu_{c^+},\mu_{c^-})\}\right]\\
            &=(1-\tau)\mathcal{L}_{sup}^\mu(f)+\tau\mathbb{E}_{c^+\sim \rho}\mathbb{E}_{x\sim\mathcal{D}_{c^+}}\left[-\log\sigma(f(x)^T\mu_{c^+})-\log\sigma(-f(x)^T\mu_{c^+})\right]\\
		&\ge(1-\tau)\mathcal{L}_{sup}^\mu(f)+2\tau \log 2
	\end{aligned}
}
\end{equation*}

\subsection{Generalization bound}\label{appendix:generalization_bound}
Denote
\begin{equation*}
	{\small
\begin{aligned}
\tilde{\mathcal{F}}= &\left\{ \tilde{f} \left( x_i,x^+_i,x^-_{i1},\cdots,x^-_{iK}\right) = \right.\\
& \qquad\left. \left( f(x_i),f(x_i^+),f(x^-_{i1}),\cdots,f(x^-_{iK})\right) |f\in\mathcal{F} \right\}.\\
\end{aligned}}
\end{equation*}
Let
\begin{math}
{\small q_{\tilde{f}}=h\circ \tilde{f}}
\end{math}, 
and its function class,
\begin{equation*}
{\small \mathcal{Q}=\left\{q=h\circ \tilde{f}|\tilde{f}\in\tilde{\mathcal{F}}\right\}.}
\end{equation*}
Denote 
\begin{math}
{\small z_i=\left(x_i,x^+_i,x^-_{i1},\cdots,x^-_{iK}\right)}
\end{math},
suppose $\ell$ is bounded by $B$, then we can decompose $h=\frac{1}{B}\ell\circ \phi$. Then we have $q_{\tilde{f}}(z_i)=\frac{1}{B}\ell(\phi(\tilde{f}(z_i)))$, where
\begin{equation}\label{eq:decomposition}
{\small \begin{aligned}
\phi(\tilde{f}(z_i))&=\left(\sum_{t=1}^{d}f(x_i)_t f(x_{i0}^+)_t,\sum_{t=1}^{d}f(x_i)_t f(x_{i1}^-)_t,\cdots,\right.\\
&\left.\qquad\qquad\qquad\sum_{t=1}^{d}f(x_i)_t f(x_{iK}^-)_t\right)\\
\ell(\mathbf{x})&=-\left(\log \sigma(x_0)+\sum_{i=1}^{K}\log \sigma(-x_i))\right).
\end{aligned}}
\end{equation}
From Eq.~(\ref{eq:decomposition}), we know that $\phi:\mathbb{R}^{(K+2)d}\to \mathbb{R}^{K+1}$.\\
Then we will prove that $h$ is $L$-Lipschitz by proving that $\phi$ and $\ell$ are both Lipschitz continuity. First, 
{\small
\begin{align*}
&\frac{\partial \phi(\tilde{f}(z_i))}{\partial f(x_i)_t}=f(x_{ik})_{t}=
\begin{cases}
f(x_{i0}^+)_{t}, & k=0\\
f(x_{ik}^-)_{t}, & k=1,\cdots,K
\end{cases}\\   
&\frac{\partial \phi(\tilde{f}(z_i))}{f(x_{i0}^+)_t}=f(x_i)_{t},\quad \frac{\partial \phi(\tilde{f}(z_i))}{f(x_{ik}^-)_{t}}=f(x_i)_{t}.\\
\end{align*}}
If we assume {\small $\sum_{t=1}^d{f(x_{ik})_t^2}\le R^2$ and $\sum_{t=1}^d{f(x_i)_{t}^2}\le R^2$, 
\begin{equation*}
\begin{split}
\norm{J}_F&=\sqrt{\sum_{t=1}^{d}{f(x_{i0}^+)_t^2}+\sum_{k=1}^{K}\sum_{t=1}^{d}{f(x_{ik}^-)_t^2}+(K+1)\sum_{t=1}^{d}{f(x_i)_t^2}}\\
          &\le \sqrt{2(K+1)R^2}=\sqrt{2(K+1)}R
\end{split}
\end{equation*}}
Combining $\norm{J}_2\le\norm{J}_F$, we obtain that $\phi$ is $\sqrt{2(K+1)}R$-Lipschitz. Similarly, $\ell$ is $\sqrt{K+1}$-Lipschitz. Since we assume that the inner product of embedding is no more than $R^2$. Thus, $l$ is bounded by $B=-(K+1)\log(\sigma(-R^2))$. Above all, $h$ is $L$-Lipschitz with $L=\frac{\sqrt{2}(K+1)R}{B}$. Applying vector-contraction  inequality \citep{DBLP:vector-contraction}, we have  {\small
\begin{equation*}
\mathbb{E}_{\sigma\sim\{\pm 1\}^{M}}[\sup_{\tilde{f}\in\tilde{\mathcal{F}}} \langle \sigma,(h\circ\tilde{f})_{|\mathcal{S}}\rangle]\le \sqrt{2}L\mathbb{E}_{\sigma\sim\{\pm 1\}^{(K+1)dM}}[\sup_{\tilde{f}\in\tilde{\mathcal{F}}} \langle \sigma,\tilde{f}_{|\mathcal{S}}\rangle].
\end{equation*}}
If we write it in Rademacher complexity manner, we have {\small
\begin{equation*}
\mathcal{R}_{\mathcal{S}}(\mathcal{Q})\le \frac{2(K+1)R}{B}\mathcal{R}_{\mathcal{S}}(\mathcal{F}).
\end{equation*}}
Applying generalization bounds based on Rademacher complexity \citep{foundations_of_machine_learning} to $q\in\mathcal{Q}$. For any $\delta>0$, with the probability of at least $1-\frac{\delta}{2}$, {\small
\begin{equation*}
\begin{split}
\mathbb{E}[q(\mathbf{z})]\le &\frac{1}{M}\sum_{i=1}^{M}q(\mathbf{z}_i)+\frac{2\mathcal{R}_{\mathcal{S}}(\mathcal{Q})}{M}+3\sqrt{\frac{\log \frac{4}{\delta}}{2M}}\\
\le & \frac{1}{M}\sum_{i=1}^{M}q(\mathbf{z}_i)+\frac{4(K+1)R\mathcal{R}_{\mathcal{S}}(\mathcal{F})}{BM}+3\sqrt{\frac{\log \frac{4}{\delta}}{2M}}.
\end{split}
\end{equation*}}
Thus for any $f$,  {\small
\begin{equation}\label{eq:Rademacher}
\mathcal{L}_{nce}(f)\le \tilde{\mathcal{L}}_{nce}(f)+\frac{4(K+1)R\mathcal{R}_{\mathcal{S}}(\mathcal{F})}{M}+3B\sqrt{\frac{\log \frac{4}{\delta}}{2M}}.
\end{equation}}
Let $\hat{f}=\arg\min_{f\in\mathcal{F}}\tilde{\mathcal{L}}_{nce}(f)$ and $f^*=\arg\min_{f\in\mathcal{F}}\mathcal{L}_{nce}(f)$.
By Hoeffding's inequality, with probability of $1-\frac{\delta}{2}$,  {\small
\begin{equation}
\tilde{\mathcal{L}}_{nce}(f^*)\le \mathcal{L}_{nce}(f^*)+B\sqrt{\frac{\log \frac{2}{\delta}}{2M}}
\end{equation}}
Substituting $\hat{f}$ into Eq.~(\ref{eq:Rademacher}), combining {\small
\begin{math}
\tilde{\mathcal{L}}_{nce}(\hat{f})\le \mathcal{L}_{nce}(f^*)
\end{math}} and applying union bound, with probability of at most $\delta$  {\small
\begin{equation}\label{eq:nce_bound}
\begin{split}
\mathcal{L}_{nce}(\hat{f})\le &\tilde{\mathcal{L}}_{nce}(\hat{f})+\frac{4(K+1)R\mathcal{R}_{\mathcal{S}}(\mathcal{F})}{M}+3B\sqrt{\frac{\log \frac{4}{\delta}}{2M}}+B\sqrt{\frac{\log \frac{2}{\delta}}{2M}}\\
&\le\mathcal{L}_{nce}(f^*)+\frac{4(K+1)R\mathcal{R}_{\mathcal{S}}(\mathcal{F})}{M}+4B\sqrt{\frac{\log \frac{4}{\delta}}{2M}}\\
&\le\mathcal{L}_{nce}(f)+\frac{4(K+1)R\mathcal{R}_{\mathcal{S}}(\mathcal{F})}{M}-4(K+1)\log(\sigma(-R^2))\sqrt{\frac{\log \frac{4}{\delta}}{2M}}\\
\end{split}
\end{equation}}
fails. Thus, with probability of at least $1-\delta$, Eq.~(\ref{eq:nce_bound}) holds.\\

\begin{algorithm}[t] \small
	\SetAlgoLined
	\SetKwFunction{seed}{SeedSelect}
	\SetKwFunction{contrast}{Contrast}
	\SetKwInput{parameter}{Parameter}
	\parameter{Parameters of an (additional) GNN layer $g$.}
	\SetKwProg{Pn}{Function}{:}{end}
	\Pn{\contrast{$\mathcal{C}$, $\mathcal{G}$, $\mathcal{X}$, $f$}}{
		Let $g(x_i)$ be the representation of $x_i$ by stacking $g$ upon $f$;\\
		Randomly pick a negative node $x^-_i$ from $\mathcal{V}$ for each $x_i \in \mathcal{C}$;\\
		\KwRet $\left\{(g(x_i),f(x_i),f(x^-_i))\right\}_{x_i\in\mathcal{G}}$;
	}
	\SetKwProg{Pn}{Function}{:}{end}
	\Pn{\seed{$\mathcal{G}$, $\mathcal{X}$, $f$, $epoch$}}{
		\KwRet $\mathcal{V}$;
	}
	\caption{ML}
	\label{alg:ML}
\end{algorithm}

\subsection{Class collision loss}\label{apendix: class_collision}
Let $p_i=\abs{f(x)^Tf(x_i)}$ and $p=\max_{i\in\{0,1,\cdots,K\}}p_i$. Considering {\small
\begin{equation}
\begin{split}
\mathcal{L}_{nce}^=(f)=& -\mathbb{E}\left[\log \sigma(f(x)^Tf(x^+_0))+\sum_{i=1}^{K}\log \sigma(-f(x)^Tf(x_i^-)))\right]\\
&=\mathbb{E}\left[\log(1+e^{-f(x)^Tf(x^+_0)})+\sum_{i=1}^{K}\log (1+e^{f(x)^Tf(x_i^-)})\right]\\
&\le (K+1)\mathbb{E}\left[\log(1+e^p)\right]\\
&\le (K+1)\log 2+(K+1)\mathbb{E}\left[p\right]\\
\end{split}
\end{equation}}
Since $x,x^+_0,x^-_1,\cdots,x^-_{K}$ are sampled i.i.d. from the same class, {\small
\begin{equation}
\mathbb{E}[p]=\int P[p\ge x]dx=\int (1-(1-P[p_0\ge x])^{K+1}dx.
\end{equation}}
Applying Bernoulli's inequality, we have
\begin{equation}\label{eq:same_class}
\begin{split}
\mathbb{E}[p]&\le \int (1-(1-(K+1)P[p_0\ge x]))dx\\
&= \int (K+1)P[p_0\ge x]dx\\
&=(K+1)\mathbb{E}[p_0]\\
&=(K+1)\mathbb{E}[\abs{f(x)^Tf(x^+_0)}]\\
&\le (K+1)\sqrt{\mathbb{E}[(f(x)^Tf(x^+_0))^2]}.\\
\end{split}	
\end{equation}
Therefore, {\small
\begin{equation}
    \mathcal{L}_{nce}^=(f)\le (K+1)\log 2+(K+1)^2s(f)
\end{equation}}

\newpage
\section{Details of Graph Contrastive Algorithms}\label{ap:ml-lc}
Algorithm~\ref{alg:framework} presents the graph contrastive learning framework and how Contrast-Reg is plugged into it. The functions $SeedSelect$ and $Contrast$ in the graph contrastive learning framework are responsible for selecting the seed nodes and sampling the positive/negative pairs for these seeds while incorporating various priors for both structural and attribute aspects of the graph. 
he detailed $SeedSelect$ and $Contrast$ implementations for ML, LC, GCA are referred to Algorithm \ref{alg:ML}, \ref{alg:LC} and \ref{alg:GCA}.

\begin{algorithm}[t] \small
	\SetAlgoLined
	\SetKwFunction{seed}{SeedSelect}
	\SetKwFunction{contrast}{Contrast}
	\SetKwInput{hyperparameter}{Hyperparameter}
	\hyperparameter{$R$: curriculum update epochs; $k$: the number of candidate positive samples for seed node;}
	\SetKwProg{Pn}{Function}{:}{end}
	\Pn{\contrast{$\mathcal{C}$, $\mathcal{G}$, $\mathcal{X}$, $f$}}{
		For $x_i\in \mathcal{C}$, let $\mathcal{N}^+_i$ be the set of $k$ nodes in $\{x_j \in \text{Neighbor}(x_i)\}$ with largest $f(x_i)^Tf(x_j)$; \\
		Randomly pick one positive node $x^+_i$ from $\mathcal{N}^+_i$ for each $x_i\in \mathcal{C}$;\\ 
		Randomly pick one negative node $x^-_i$ from $\mathcal{V}$ for each $x_i\in \mathcal{C}$;\\
		\KwRet $\left\{(f(x_i),f(x^+_i),f(x^-_i))\right\}_{x_i\in\mathcal{C}}$;
	}
	\SetKwProg{Pn}{Function}{:}{end}
	\Pn{\seed{$\mathcal{G}$, $\mathcal{X}$, $f$, $epoch$}}{
		\If{$epoch$ \% $R$ $\neq$ 1}{\KwRet the same set of seed nodes $C$ as in the last epoch ;}
		$p_{i,j} \leftarrow \dfrac{f(i)^Tf(j)}{\sum_{k\in\mathcal{V}}f(i)^Tf(k)}$ for $i,j\in\mathcal{V}$;\\
		$H(i)\leftarrow -\sum_{j\in\mathcal{V}}(p_{i,j}\log\left(p_{i,j}\right))$ for $i\in\mathcal{V}$;\\
		\KwRet $(\lfloor \frac{epoch}{R} \rfloor+1)\frac{R}{e}\abs{\mathcal{G}}$ nodes with smallest $H$;
	}
	\caption{LC}
	\label{alg:LC}
\end{algorithm}

\begin{algorithm}[t] \small
	\SetAlgoLined
	\SetKwFunction{seed}{SeedSelect}
	\SetKwFunction{contrast}{Contrast}
         \SetKwInput{hyperparameter}{Hyperparameter}
	\hyperparameter{two stochastic augmentation functions set $\mathcal{T}$ and $\mathcal{T^\prime}$;}
	\SetKwProg{Pn}{Function}{:}{end}
	\Pn{\contrast{$\mathcal{C}$, $\mathcal{G}$, $\mathcal{X}$, $f$}}{
		Sample two stochastic augmentation functions $t\sim\mathcal{T}$ and $t\sim\mathcal{T^\prime}$;\\
            Generate two graph views $\tilde{G}_1=t(G)$ and $\tilde{G}_2=t^\prime(G)$ by performing corruption on $G$;\\
		\KwRet $\left\{(f(\tilde{G}_1(x_i)),f(\Tilde{G}_2(x_i)),f(\Tilde{G}_2(x_j))\right\}_{x_i, x_i\in\mathcal{G}, x_j\ne x_i}$;
	}
	\SetKwProg{Pn}{Function}{:}{end}
	\Pn{\seed{$\mathcal{G}$, $\mathcal{X}$, $f$, $epoch$}}{
		\KwRet $\mathcal{V}$;
	}
	\caption{GCA}
	\label{alg:GCA}
\end{algorithm}

\section{Supplementary Experiments}

\subsection{Plug Contrast-Reg into DiGCL and GDCL}\label{sec:sup-contrast}
\tmlr{We applied Contrast-Reg to two graph contrastive learning algorithms, namely DiGCL \citep{tong2021directed} and GDCL \citep{ijcai2021p473}. GDCL employs graph clustering to reduce the number of false-negative samples, whereas DiGCL generates contrastive pairs using Laplacian perturbations to better preserve the structural attributes of directed graphs. In Table \ref{tab:DiGCL}, we present a comparative analysis of the performance of Contrast-Reg, applied to both algorithms, as well as their performance without Contrast-Reg. We used the official implementations and the parameters of both algorithms, and conducted experiments with random seeds ranging from 0 to 9. 
Our results demonstrate that Contrast-Reg serves as an effective plugin to these advanced graph contrastive learning algorithms, even when applied to directed graphs (DiGCL). }


\begin{table}[t!]
\centering
\caption{DiGCL \& GDCL with and without \reg} \label{tab:DiGCL}
\begin{tabular}{lcccc}
    \toprule
    Algorithm & Contrast-Reg & Cora-ml & Citeseer & Cora \\
    \midrule
    DiGCL& & 76.62\PM{1.23} & 65.54\PM{1.39} &\\ \hdashline
    DiGCL& $\checkmark$& \textbf{79.74}\PM{0.62} & \textbf{68.74}\PM{1.13} & \\ \hdashline
    GDCL& & & 72.98\PM{0.73} & 85.41\PM{0.53}\\ \hdashline
    GDCL& $\checkmark$& & \textbf{76.39}\PM{1.16} & \textbf{85.80}\PM{0.46}\\
    \bottomrule
\end{tabular}
\end{table}

\section{Experiment details}\label{sec:exp_detail}
\paragraph{Dataset statistics}

The datasets we employed encompass citation networks, web graphs, co-purchase networks, and social networks. The detailed dataset statistics are shown in Table \ref{tab:summary}.
\begin{table*}[t]
    \centering
	\caption{Datasets} 	\label{tab:summary}
	\vspace{-3mm}
	{\small 
	\begin{tabular}{crrrr}
		\toprule
		Dataset & Node \#  & Edge \#  & Feature \#  & Class \# \\
		\midrule
		Cora \citep{yang2016revisiting} & 2,708& 5,429& 1,433&7\\
		Citeseer \citep{yang2016revisiting} &  3,327& 4,732& 3,703&6\\
		Pubmed \citep{yang2016revisiting} &  19,717& 44,338& 500&3\\
		ogbn-arxiv \citep{ogb} &  169,343&  1,166,243& 128&40\\
		Wiki \citep{DBLP:wiki}& 2,405& 17,981& 4,973&3\\
		Computers \citep{DBLP:pitfalls}&  13,381& 245,778& 767&10\\
		Photo \citep{DBLP:pitfalls}&  7,487& 119,043& 745&8\\
		ogbn-products \citep{ogb}&   2,449,029& 61,859,140& 100&47\\
		Reddit \citep{hamilton2017inductive}&  232,965& 114,615,892& 602&41\\
		\bottomrule
	\end{tabular}}
\end{table*}

\paragraph{Hardware Configuration:} The experiments are conducted on Linux servers installed with an Intel(R) Xeon(R) Silver 4114 CPU @ 2.20GHz, 256GB RAM and 8 NVIDIA 2080Ti GPUs.\\
\paragraph{Software Configuration:} Our models, as well as the DGI, GMI and GCN baselines, were implemented in PyTorch Geometric~\citep{pyg} version 1.4.3, DGL~\citep{dgl} version 0.5.1 with CUDA version 10.2, scikit-learn version 0.23.1 and Python 3.6. Our codes and datasets will be made available.\\
\paragraph{Hyper-parameters:} For full batch training, we used 1-layer GCN as the encoder with prelu activation, for mini-batch training, we used a 3-layer GCN with prelu activation. We conducted grid search of different learning rate (from 1e-2, 5e-3, 3e-3, 1e-3, 5e-4, 3e-4, 1e-4) and curriculum settings (including learning rate decay and curriculum rounds) on the fullbatch version. We used 1e-3 or 5e-4 as the learning rate; 10,10,15 or 10,10,25 as the fanouts and 1024 or 512 as the batch size for mini-batch training.

\end{document}